\definecolor{shadecolor}{gray}{0.9}
\newacronym[longplural={neural processes}]{np}{NP}{neural process}
\newacronym[longplural={transformer neural processes}]{tnp}{TNP}{transformer neural process}
\newacronym{te-tnp}{TE-TNP}{translation-equivariant TNP}
\newacronym{pt-tnp}{PT-TNP}{pseudo-token TNP}
\newacronym{te-pt-tnp}{TE-PT-TNP}{translation-equivariant PT-TNP}
\newacronym{cnp}{CNP}{conditional NP}
\newacronym{anp}{ANP}{attentive NP}
\newacronym{tnp-d}{TNP-D}{diagonal TNP}
\newacronym{tnp-ar}{TNP-AR}{autoregressive TNP}
\newacronym{tnp-nd}{TNP-ND}{non-diagonal TNP}
\newacronym{eq-tnp}{EQTNP}{efficient-query TNP}
\newacronym{lbanp}{LBANP}{latent-bottlenecked ANP}
\newacronym{ist}{IST}{induced set transformer}
\newacronym{convcnp}{ConvCNP}{convolutional conditional NP}
\newacronym{rcnp}{RCNP}{relational CNP}
\newacronym{mhsa}{MHSA}{multi-head self attention}
\newacronym{mhca}{MHCA}{multi-head cross attention}
\newacronym{te-mhsa}{TE-MHSA}{translation-equivariant multi-head self attention}
\newacronym{te-mhca}{TE-MHCA}{translation-equivariant multi-head cross attention}
\newacronym{model}{ICICL-TNP}{in-context in-context learning TNP}
\title[In-Context In-Context Learning]{In-Context In-Context Learning with Transformer Neural Processes}
 \author{\Name{Matthew Ashman} \Email{mca39@cam.ac.uk}\\
 \addr University of Cambridge
 \AND
 \Name{Cristiana Diaconu} \Email{cdd43@cam.ac.uk}\\
 \addr University of Cambridge
 \AND
 \Name{Adrian Weller} \Email{aw665@cam.ac.uk}\\
 \addr University of Cambridge \\
 \addr The Alan Turing Institute
 \AND
 \Name{Richard E.\ Turner} \Email{ret26@cam.ac.uk}\\
 \addr University of Cambridge \\
 \addr Microsoft Research AI for Science
 }
\begin{document}

\maketitle

\begin{abstract}
Neural processes (NPs) are a powerful family of meta-learning models that seek to approximate the posterior predictive map of the ground-truth stochastic process from which each dataset in a meta-dataset is sampled. 
There are many cases in which practitioners, besides having access to the dataset of interest, may also have access to other datasets that share similarities with it. In this case, integrating these datasets into the NP can improve predictions. We equip NPs with this functionality and describe this paradigm as \emph{in-context in-context learning}. Standard NP architectures, such as the convolutional conditional NP (ConvCNP) or the family of transformer neural processes (TNPs), are not capable of in-context in-context learning, as they are only able to condition on a single dataset. We address this shortcoming by developing the in-context in-context learning pseudo-token TNP (ICICL-TNP). The ICICL-TNP builds on the family of PT-TNPs, which utilise pseudo-token-based transformer architectures to sidestep the quadratic computational complexity associated with regular transformer architectures. Importantly, the ICICL-TNP is capable of conditioning on both sets of datapoints and sets of datasets, enabling it to perform in-context in-context learning. We demonstrate the importance of in-context in-context learning and the effectiveness of the ICICL-TNP in a number of experiments.
\end{abstract}


\section{Introduction}
\label{sec:intro}
\Glspl{np} are a broad family of meta-learning models which learn the mapping from sets of observed datapoints to predictive distributions \citep{foong2020meta}. 
They enjoy a number of attractive properties, most notably their ability to effectively model data from different modalities and drawn from complex stochastic process priors.
This flexibility makes \glspl{np} a popular choice for a wide variety of problem domains, including spatio-temporal modelling, healthcare, and few-shot learning \citep{jha2022neural}.
A key consideration of \gls{np} architectures is the ability to handle an arbitrary number of observed datapoints in an exchangeable fashion. This is achieved through the use of permutation-invariant set functions, which, in \glspl{np}, map from the set of observations to some representation space. A popular choice of architecture for these set functions are transformers \citep{vaswani2017attention,lee2019set}, giving rise to the family of \glspl{tnp} \citep{nguyen2022transformer, kim2019attentive, feng2022latent}.

In the context of meta-learning, practitioners might have access to related datasets that share similarities with the dataset of interest. If these could be integrated into the NP, its predictions would improve. For example, a meta-dataset of PDE-simulated Navier-Stokes equations could be partitioned into smaller sets of datasets with the same Reynolds number.
Meta-learning a single \gls{np} can only recover the predictive distribution of the marginal---or average---stochastic process used to generate the entire meta-dataset. 
Instead, if we had access to sufficiently many datasets for each possible Reynolds number, a strictly preferable approach would be to meta-learn multiple \glspl{np}---one corresponding to each possible stochastic process (e.g..\ Reynolds number). While this would help us recover more accurate predictive distributions, it is often infeasible in practice due to either limited data availability for each possible stochastic process or the existence of a large number of possible stochastic processes.
%


Rather than learning an individual \gls{np} for each stochastic process, we propose to learn a single \gls{np} that is able to condition on additional datasets that are known to be drawn from the same stochastic process. Intuitively, this can be thought of as amortising the learning of stochastic-process specific \glspl{np}, and we refer to this form of meta-learning as \emph{in-context in-context learning}. Constructing a model that is able to condition on sets of datasets, in addition to sets of observations, requires a function that is able to operate on sets of sets. Recently, there has been a string of research into transformer architectures which perform cross-attention between different data modalities \citep{jaegle2021perceiver,borgeaud2022improving,zhang2022crossformer,kim2021multi,shen2024episodic,xu2022mtformer,xu2023multimodal}. Our key insight is that we can use the same methods to construct a function that operates on a set of sets, providing a model architecture for in-context in-context learning. In doing so, we develop the \gls{model}. We highlight our key contributions as follows:

\begin{enumerate}
    \item 
    We show that in-context in-context learning can lead to substantially improved predictive performance.
    \item We develop a new member of \glspl{tnp} for in-context in-context learning, the \gls{model}. At the core of the \gls{model} are pseudo-token based transformer architectures, which enable scaling up to large datasets.
    \item We provide an empirical investigation into the \gls{model}, demonstrating that it: i) does not reduce the performance of a regular \gls{pt-tnp} model when in-context datasets \emph{are not observed}; and ii) improves predictive performance over a regular \gls{pt-tnp} model when in-context datasets \emph{are observed}.
\end{enumerate}




\section{Background}
\label{sec:background}
Throughout this section, we use the following notation. Let $\mcX = \R^{D_x}$ and $\mcY = \R^{D_y}$ denote the input and output spaces, and let $(\bfx, \bfy) \in \mcX \times \mcY$ denote an input-output pair. Let $\mcS = \bigcup_{N=0}^{\infty} \left(\mcX \times \mcY\right)^N$ be a collection of all finite data sets, which includes the empty set $\varnothing$. We denote a context and target set with $\mcD_c, \mcD_t \in \mcS$, where $|\mcD_c| = N_c$ and $|\mcD_t| = N_t$. Let $\bfX_c \in \R^{N_c \times D_x}$, $\bfY_c \in \R^{N_c \times D_y}$ be the inputs and corresponding outputs of $\mcD_c$, with $\bfX_t \in \R^{N_t\times D_x}$, $\bfY_t \in \R^{N_t \times D_y}$ defined analogously. We denote a single task as $\tau = (\mcD_c, \mcD_t) = ((\bfX_c, \bfY_c), (\bfX_t, \bfY_t))$. Let $\mcP(\mcX)$ denote the collection of $\mcY$-valued stochastic processes on $\mcX$. Let $\Theta$ denote the parameter space of predictive distributions over the outputs. Let $\mcZ$ denote some latent space.

\subsection{Neural Processes}
\label{subsec:np_background}
\glspl{np} \citep{garnelo2018conditional, garnelo2018neural} are a type of meta-learning model which seek to learn the \emph{posterior prediction map} $\pi_P: \mcS \rightarrow \mcP(\mcX)$, which maps from context sets $\mcD_c$ to the posterior predictive distribution over the target outputs $p(\bfY_t | \bfX_t, \mcD_c)$ under the ground-truth stochastic process $P$. \gls{np} architectures generally consist of an \emph{encoder} $e: \mcS \times \mcX \rightarrow \mcZ$, which maps from $\mcD_c$ and $\bfX_t$ to some latent representation, and a \emph{decoder} $d: \mcX \times \mcZ \rightarrow \Theta$, which takes the representation and $\bfX_t$ as inputs and maps to the parameters of the predictive distribution over the target outputs: $p(\bfY_t | \bfX_t, \mcD_c) = p(\bfY_t | d(\bfX_t, e(\bfX_t, \mcD_c)))$. In this work, we limit our attention to \glspl{cnp} \citep{garnelo2018conditional}, which factorise the predictive distribution as $p(\bfY_t | \bfX_t, \mcD_c) = \prod_{n=1}^{N_t} p(\bfy_{t, n} | d(\bfx_{t, n}, e(\bfx_{t, n}, \mcD_c)))$. \glspl{cnp} are trained by maximising the posterior predictive likelihood:
\begin{equation}
    \label{eq:cnp-objective}
    \mcL_{\text{ML}} = \Exp{p(\tau)}{\sum_{n=1}^{N_t} \log p(\bfy_{t, n} | d(\bfx_{t, n}, e(\bfx_{t, n}, \mcD_c))}.
\end{equation}
Here, the expectation is taken over $p(\tau)$. In practice, we often only have access to a finite number of tasks, in which case we can replace the expectation with a Monte-Carlo estimate.


\subsection{Transformer Neural Processes}
\label{subsec:tnp_background}
Transformers can be understood as permutation-equivariant set functions \citep{lee2019set}. This makes their use in \glspl{np} natural, since we require a set function to construct the mapping from context sets to predictive distributions. For the family of \glspl{tnp}, this is generally achieved by: 1) obtaining an initial token representation for the context points, $\bfZ^0_c \in \R^{N_c \times D_z}$, and target input locations, $\bfZ^0_t \in \R^{N_t \times D_z}$, using point-wise embeddings; 2) passing the union of the initial context and target tokens, $\bfZ^0 = [\bfZ^0_c, \bfZ^0_t]$, through a transformer-style architecture; and 3) passing the output tokens (corresponding to the target inputs) of the final layer $L$ of the \gls{tnp}, $\bfZ^L_t$, through another MLP to obtain the parameters of the predictive distribution $p(\bfY_t | \bfX_t, \mcD_c)$. We provide a thorough description of the operations used in transformer-style architectures in \Cref{app:mhsa_mhca}.

The form of transformer-style architecture varies between members of the family of \glspl{tnp}---we provide a description of several members of the family in \Cref{app:tnp_architectures}.
Whilst the regular \gls{tnp} has many advantages over other \gls{np} variants, it is hindered by its large computational complexity of $\order{N_c^2 + N_cN_t}$ given by the interaction of each token with the entire set of context tokens. This limits its application to relatively small datasets.  However, the computational complexity of the standard transformer can be reduced through the introduction of \emph{pseudo-tokens}, leading to pseudo-token based transformers. Let $\bfU \in \R^{M \times D_z}$ denote an initial set of $M \ll N_c$ tokens we refer to as pseudo-tokens. By only allowing tokens to interact with the set of context tokens $\bfZ_c$ indirectly through the smaller set of pseudo-tokens, we are able to reduce the computational complexity to $\order{MN_c + MN_t + M^2}$. We illustrate the architecture for the perceiver-style approach \citep{jaegle2021perceiver,feng2022latent} and IST-style approach \citep{lee2019set} of pseudo-token based transformers in \Cref{app:tnp_architectures}. We refer to the family of \glspl{tnp} that use a pseudo-token based transformer architecture as \glspl{pt-tnp}.

\section{In-Context In-Context Learning}
\label{sec:icicl}
In this section, we define the paradigm of in-context in-context learning. Our key result is provided in \Cref{thm:icicl}, which informally states that when provided with multiple datasets drawn from the same stochastic process, we can improve the quality of predictions by taking these datasets into account.
We then propose a meta-learning model for achieving this in \Cref{sec:icicl_tnp}, the \gls{model}.

\subsection{In-Context In-Context Learning for Mixtures of Stochastic Processes}
\label{subsec:hpm_background}
One of the goals of probabilistic meta-learning can be understood as modelling the posterior prediction map $\pi_P: \mcS \rightarrow \mcP(\mcX)$, where $P$ denotes the ground truth stochastic process over functions mapping from $\mcX$ to $\mcY$ that our datasets are sampled from. Yet, in many cases $P$ is itself a mixture of stochastic processes, and our datasets can be partitioned into samples from each stochastic process mixture. Concretely, we can model each dataset as 
\begin{equation}
        \xi_i \sim p(\xi), \quad \mcD_i \sim P(\xi_i)
\end{equation}
where $P(\xi_i) \in \mcP(\mcX)$ is a stochastic process defined by the latent variable $\xi_i \in \Xi$ from which dataset $\mcD_i$ is sampled. Consider the setting in which we have a set of additional datasets $\{\mcD_j\}$ also drawn from $P(\xi_i)$. Intuitively, the additional datasets provide information about $P(\xi_i)$, the stochastic process from which $\mcD_i$ is sampled. Conditioning on these additional datasets would therefore reduce our uncertainty of the ground-truth stochastic process from which $\mcD_i$ is sampled, and can therefore improve our approximation of the prediction map $\pi_{P(\xi_i)}$. This is formalised in the following theorem:
\begin{theorem}[In-context in-context learning]
    \label{thm:icicl}
    Let $\xi_i \sim p(\xi)$, $\mcD_i, \{\mcD_j\} \sim P(\xi_i)$. Let $p(\bfy | \bfx, \mcD_i, \xi_i)$ be the marginal posterior distribution of $P(\xi_i)$ given $\mcD_i$, $p(\bfy | \bfx, \mcD_i, \{\mcD_j\})$ be the marginal posterior distribution of the stochastic process $P$ given $\mcD_i$ and $\{\mcD_j\}$, and $p(\bfy | \bfx, \mcD_i)$ be the marginal posterior distribution of the stochastic process $P$ given $\mcD_i$. Then,
    \begin{equation}
        \Exp{\mcD_i, \{\mcD_j\}, \xi_i}{\KL{p(\bfy | \bfx, \mcD_i, \xi_i)}{p(\bfy | \bfx, \mcD_i, \{\mcD_j\})}} \leq \Exp{\mcD_i, \xi_i}{\KL{p(\bfy | \bfx, \mcD_i, \xi_i)}{p(\bfy | \bfx, \mcD_i)}}.
    \end{equation}
\end{theorem}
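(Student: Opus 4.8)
The plan is to rewrite both sides as expected log-likelihoods and to identify the gap between them as a nonnegative conditional mutual information. First I would expand each divergence with the cross-entropy identity $\KL{q_0}{q} = \Exp{\bfy \sim q_0}{\log q_0(\bfy)} - \Exp{\bfy \sim q_0}{\log q(\bfy)}$, taking $q_0 = p(\bfy \given \bfx, \mcD_i, \xi_i)$ on both sides. Since $q_0$ depends only on $(\mcD_i, \xi_i)$ and not on $\{\mcD_j\}$, the entropy term $\Exp{\bfy \sim q_0}{\log q_0(\bfy)}$ is identical on both sides: on the left-hand side its expectation over $\{\mcD_j\}$ is trivial, so it matches the right-hand side and cancels when I subtract. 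What remains of (right-hand side) $-$ (left-hand side) is the difference of cross-entropy terms $\Exp{\mcD_i,\{\mcD_j\},\xi_i}{\Exp{\bfy\sim q_0}{\log p(\bfy\given\bfx,\mcD_i,\{\mcD_j\})}} - \Exp{\mcD_i,\xi_i}{\Exp{\bfy\sim q_0}{\log p(\bfy\given\bfx,\mcD_i)}}$.

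The key step is to collapse these nested expectations onto the true joint $p(\xi_i,\mcD_i,\{\mcD_j\},\bfy)$ induced by the generative model. Here I would invoke the structural assumption that, given $\mcD_i$ and $\xi_i$, the target $\bfy$ is conditionally independent of the additional datasets $\{\mcD_j\}$ (they are separate draws from $P(\xi_i)$), so that $p(\bfy\given\bfx,\mcD_i,\xi_i) = p(\bfy\given\bfx,\mcD_i,\{\mcD_j\},\xi_i)$. Consequently, averaging $q_0$ against $p(\mcD_i,\{\mcD_j\},\xi_i)$ reconstructs $p(\mcD_i,\{\mcD_j\},\xi_i,\bfy)$; because neither log-density depends on $\xi_i$, I can marginalise it out and reduce the difference to $\Exp{\mcD_i,\{\mcD_j\},\bfy}{\log p(\bfy\given\bfx,\mcD_i,\{\mcD_j\})} - \Exp{\mcD_i,\bfy}{\log p(\bfy\given\bfx,\mcD_i)}$, where both densities are now the true posteriors. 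Appending the (irrelevant) variable $\{\mcD_j\}$ to the averaging in the second term and combining the logarithms turns this into $\Exp{\mcD_i,\{\mcD_j\}}{\KL{p(\bfy\given\bfx,\mcD_i,\{\mcD_j\})}{p(\bfy\given\bfx,\mcD_i)}}$, i.e. the conditional mutual information $I(\bfy;\{\mcD_j\}\given\mcD_i,\bfx)$, which is manifestly nonnegative. This yields (right-hand side) $\geq$ (left-hand side).

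The main obstacle is getting the averaging distribution over $\{\mcD_j\}$ exactly right. It is tempting to observe that $p(\bfy\given\bfx,\mcD_i) = \Exp{\{\mcD_j\}\given\mcD_i}{p(\bfy\given\bfx,\mcD_i,\{\mcD_j\})}$ and invoke convexity of $\mathrm{KL}$ in its second argument; however, Jensen then delivers the inequality in the \emph{wrong} direction and, crucially, with $\{\mcD_j\}$ averaged under $p(\{\mcD_j\}\given\mcD_i)$ rather than under the generative $p(\{\mcD_j\}\given\mcD_i,\xi_i)$ appearing in the theorem. The entire point is that the in-context datasets are drawn from the \emph{true} process $P(\xi_i)$ and are therefore correlated with $\xi_i$ and genuinely informative; a correct argument must preserve this correlation, which the cross-entropy/mutual-information route does automatically whereas the naive Jensen route destroys it. I would therefore take care throughout to keep the $\{\mcD_j\}$-expectation tied to $\xi_i$ through the model, to apply the conditional-independence assumption $\bfy \perp \{\mcD_j\} \mid (\mcD_i,\xi_i)$ precisely where the marginalisation of $\xi_i$ is performed, and to note integrability so that subtracting the common entropy term is legitimate.
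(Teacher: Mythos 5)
Your proof is correct and takes essentially the same route as the paper's: both cancel the common entropy term $\Exp{\mcD_i,\xi_i}{\entropy{p(\bfy \given \bfx, \mcD_i, \xi_i)}}$, then use the conditional independence of $\bfy$ and $\{\mcD_j\}$ given $(\mcD_i, \xi_i)$ together with Fubini to marginalise $\xi_i$ and collapse the cross-entropy terms onto the true posteriors $p(\bfy \given \bfx, \mcD_i, \{\mcD_j\})$ and $p(\bfy \given \bfx, \mcD_i)$. Your finishing step---identifying the remaining gap as the nonnegative conditional mutual information $\Exp{\mcD_i,\{\mcD_j\}}{\KL{p(\bfy \given \bfx, \mcD_i, \{\mcD_j\})}{p(\bfy \given \bfx, \mcD_i)}}$---is exactly equivalent to the paper's ``conditioning reduces entropy'' inequality, and is in fact phrased more carefully, since that inequality holds only in expectation over $\{\mcD_j\}$ rather than pointwise as the paper's proof asserts.
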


\begin{sproof}
    Observe that $\Exp{\mcD_i, \{\mcD_j\}, \xi_i}{\KL{p(\bfy | \bfx, \mcD_i, \xi_i)}{p(\bfy | \bfx, \mcD_i, \{\mcD_j\})}}$ can be expressed as
    \begin{equation}
        \begin{aligned}
            &\Exp{\mcD_i, \{\mcD_j\}, \xi_i}{\KL{p(\bfy | \bfx, \mcD_i, \xi_i)}{p(\bfy | \bfx, \mcD_i, \{\mcD_j\})}} \\
            &\qquad = -\Exp{\mcD_i, \xi_i}{\entropy{p(\bfy | \bfx, \mcD_i, \xi_i)}} - \Exp{\mcD_i, \{\mcD_j\}, \bfy}{\log p(\bfy | \bfx, \mcD_i, \{\mcD_j\}} \\
            &\qquad = -\Exp{\mcD_i, \xi_i}{\entropy{p(\bfy | \bfx, \mcD_i, \xi_i)}} + \Exp{\mcD_i}{\Exp{\{\mcD_j\} | \mcD_i}{\entropy{p(\bfy | \bfx, \mcD_i, \{\mcD_j\}}}} \\
            &\qquad \leq -\Exp{\mcD_i, \xi_i}{\entropy{p(\bfy | \bfx, \mcD_i, \xi_i)}} + \Exp{\mcD_i}{\entropy{p(\bfy | \bfx, \mcD_i)}} \\
            &\qquad = \Exp{\mcD_i, \xi_i}{\KL{p(\bfy | \bfx, \mcD_i, \xi_i)}{p(\bfy | \bfx, \mcD_i})}
        \end{aligned}
    \end{equation}
    The inequality holds as $\entropy{p(\bfy | \bfx, \mcD_i, \{\mcD_j\}} \leq \entropy{p(\bfy | \bfx, \mcD_i)} \ \forall \{\mcD_j\}$, where $\entropy{\cdot}$ denotes the entropy. See \Cref{app:icicl_proof} for a detailed proof.
\end{sproof}
Note that $p(\bfy_t | \bfx_t, \mcD_i, \{\mcD_j\}, \xi_i) = p(\bfy_t | \bfx_t, \mcD_i, \xi_i)$, as $\mcD_i$ and $\{\mcD_j\}$ are conditionally independent given $\xi_i$. Since we do not observe $\xi_i$, \Cref{thm:icicl} tells us that we should target the predictive distribution $p(\bfy | \bfx, \mcD_i, \{\mcD_j\})$, instead of $p(\bfy | \bfx, \mcD_i)$. We refer to this form of learning as \emph{in-context in-context learning} (ICICL), and it requires models that are able to condition on a set of datasets, in addition to individual datasets. We refer to the additional set of datasets as the \emph{in-context datasets}, and shall denote this set as $\{\mcD_{ic, j}\}_{j=1}^{N_{ic}}$, where $N_{ic}$ is the number of in-context datasets, with $|\mcD_{ic, j}| = N_{ic, j}$.

\subsection{In-Context In-Context Learning with Transformer Neural Processes}
\label{sec:icicl_tnp}
\begin{figure}[htb]
    \centering
    \includegraphics[width=\textwidth]{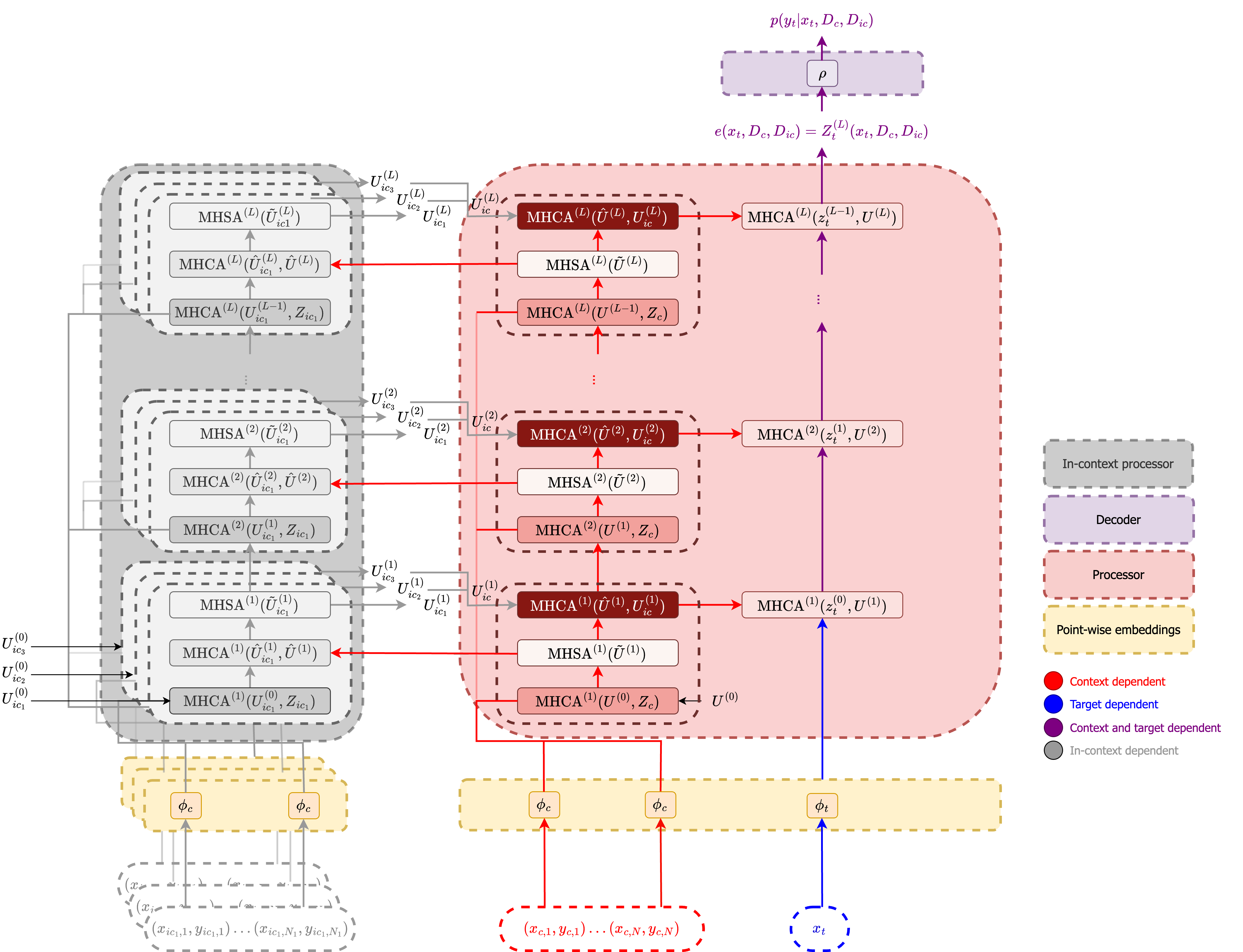}
    \caption{A diagram illustrating the \gls{model} architecture with three in-context datasets. The point-wise embedding layer is used to get an initial token representation of all datapoints, including the target input location $\bfx_t$. Then for each layer of the processor, pseudo-token representations for each of the in-context datasets, $\bfU_{ic}$, and the context dataset, $\bfU$, are updated through \gls{mhca} operations. The in-context pseudo-tokens $\bfU_{ic}$ are then modulated by the context pseudo-tokens $\bfU$, followed by \gls{mhsa} operations on each set of pseudo-tokens. The in-context pseudo-tokens then modulate the context pseudo-tokens, and finally the context pseudo-tokens modulate the token representation of the target input, $\bfz_t$. After $L$ layers, the processor outputs the encoder representation $e(\bfx_t, \mcD_c, \{\mcD_{ic, j}\}_{j=1}^{j=N_{ic}})$.}
    \label{fig:icicl-tnp}
\end{figure}

Whilst current \gls{tnp} architectures are capable of conditioning on individual datasets, they are not capable of conditioning on an arbitrary number of additional in-context datasets in order to approximate the predictive distribution $p(\bfy | \bfx, \mcD_i, \{\mcD_{ic, j}\}_{j=1}^{N_{ic}})$. Fortunately, the flexibility afforded by \gls{mhca} layers makes this extension natural. As the total number of datapoints within each dataset is potentially large, we shall make use of the pseudo-token based architectures discussed in \Cref{subsec:tnp_background}. There exists a number of possible choices of architecture, but unless stated otherwise, we shall use the architecture illustrated in \Cref{fig:icicl-tnp}, with a description of other choices of architecture provided in \Cref{app:icicl_tnp}. We refer to this family of \glspl{pt-tnp} as \glspl{model}. 

At a high level, \glspl{model} perform the following set of operations: 1) construct initial token representations of each datapoint within each dataset using point-wise encodings; 2) construct a pseudo-token representation for each of the in-context datasets, $\bfU_{ic, j}$, and context dataset, $\bfU$; 3) perform cross-attention between the in-context pseudo-tokens and context pseudo-tokens. We note that this architecture forms a valid set function \citep{zaheer2017deep,wagstaff2022universal} on the context datapoints, a valid set function on the set of in-context datasets, and a valid set function on the datapoints within each in-context dataset (see \Cref{app:valid_set_function} for proof). The \gls{model} has an asymptotic computational complexity $\order{MN_c + MN_t + \sum_{j=1}^{N_{ic}}\left(M_{ic}N_{ic, j} + MM_{ic} \right)}$, where $N_{ic}$ denotes the number of in-context datasets, $N_{ic, j}$ denotes the number of datapoints in the $j$-th in-context dataset, and $M_{ic}$ denotes the number of pseudo-tokens used for each in-context dataset. Importantly, this scales linearly in $N_c$, $N_t$, $N_{ic}$ and $N_{ic, j}$, enabling it to scale to both large datasets and many in-context datasets.

The \gls{model} is trained in an analogous fashion to \glspl{cnp}, whereby the model parameters are optimised by maximising the predictive likelihood:
\begin{equation}
    \mcL_{\text{ML}} = \Exp{p(\tau)}{\sum_{n=1}^{N_t}\log p(\bfy_{t, n} | d(\bfx_{t, n}, e(\bfx_{t, n}, \mcD_c, \{\mcD_{ic}\})}.
\end{equation}
Here, tasks $\tau = (\mcD_c, \{\mcD_{ic}\}, \mcD_t)$ contain additional in-context datasets, which are fed into the encoder. 

\section{Related Work}
\label{sec:related-work}
\paragraph{Cross-Attention Based Architectures in NLP}
Although not described as in-context in-context learning, similar architectures have been proposed for conditioning on additional datasets in NLP. Notably, the retrieval-enhanced transformer (RETRO) \citep{borgeaud2022improving} performs cross-attention between the token representations of additional `retrieved' texts, demonstrating an  improvement in performance over the original transformer architecture \citep{vaswani2017attention}. However, the architecture used in RETRO differs substantially from ours. Rather than conditioning on entire datasets, RETRO performs chunking of text to perform next chunk prediction, in which other chunks similar to the previous chunks are used to modulate the predictions. Further, they do not make use of pseudo-token-based transformer architectures to enable scaling to chunks containing more than 64 tokens.

\paragraph{Cross-Attention Based Architectures for Multi-Task Learning}
A closely related family of transformer architectures are those used for multi-task learning \citep{zhang2022crossformer,kim2021multi,shen2024episodic,xu2022mtformer} and multi-modal learning \citep{jaegle2021perceiver,xu2023multimodal}. An important difference between this style of architecture and that used for in-context in-context learning is that exchangeability of the in-context datasets is not required, nor desirable. This owes to the fact that multi-output and multi-modal datasets are not independent samples from the same underlying stochastic process. Rather, they can be understood as multiple outputs of the same sample from some stochastic process. The difference is subtle, and in practice is achieved through positional encodings for each output of multi-output data, and different tokenisations of each mode in multi-modal data.

\paragraph{Conditioning on Exchangeable Datasets in Causal ML} Finally, the causal structure induction with a supervised approach (CSIvA) model from \cite{ke2022learning} also shares similarities with our approach. Similar to the \gls{model}, the CSIvA conditions on multiple datasets in an exchangeable manner to make inference. The two models differ significantly elsewhere, however. Whereas we are interested in predictive distributions over outputs, \cite{ke2022learning} are interested in inferring causal structure from observational and interventional data. These differences materialise in their use of positional encoding to indicate the identity of the causal node of observations. In a sense, this model bears more resemblance to the multi-task learning architectures, with the additional ability to condition on many instances of samples from the stochastic process in a single forward pass.

\section{Experiments}
\label{sec:experiments}
In this section, we investigate the performance of the \gls{model}. We seek to answer two questions: 1) can the \gls{model} recover the performance of a regular \gls{pt-tnp} when no in-context datasets are provided; 2) how does the performance of the \gls{model} vary with the number of in-context datasets provided. In each experiment, we compare the performance of the \gls{model} with a regular \gls{pt-tnp} equivalent. We also provide results for a version of the \gls{cnp} that supports ICICL (see \Cref{app:icicl_cnp} for details of the architecture) alongside the regular \gls{cnp}. We provide more thorough experimental details in \Cref{app:experiment_details}, including the choice of model architecture and model training.

\subsection{Synthetic Regression}
We consider a synthetic 1-D regression task using samples drawn from Gaussian processes (GPs) with different kernel types and different kernel hyperparameters. First, we sample either a radial basis function (RBF) or periodic kernel. The kernel hyperparameter $\ell$---corresponding to the lengthscale and period for the RBF and periodic kernel, respectively---is sampled as $\log \ell \sim \mcU_{[\log 0.25, \log 4]}$. This is shared between the context and in-context datasets. For each context dataset, the number of context points is drawn according to $N_c \sim \mcU\{1, 64\}$, while the number of target points is set to $N_t=128$. The context and target inputs are sampled from $x_c \sim \mcU_{[-2, 2]}$ and $x_t \sim \mcU_{[-4, 4]}$. For each such context dataset, we sample $N_{ic}$ in-context datasets with $N_{ic} \sim \mcU\{0, 5\}$, where the number of datapoints for each in-context dataset is sampled as $N_{ic, j} \sim \mcU\{64, 128\}$. The in-context inputs are sampled according to $x_{ic} \sim \mcU_{[-4, 4]}$. The observation noise is set to 0.2, and the test set consist of $80,000$ datasets.

\Cref{fig:ic-gp-performance} shows that when no in-context information is available, the \gls{model} achieves similar performance to that of the \gls{pt-tnp} (within one standard deviation), and significantly outperforms the \gls{cnp}. Conditioning on a single in-context dataset significantly improves the predictive performance of the \gls{model}, with the improvements in performance plateauing as the number of in-context datasets is increased further. Importantly, these results demonstrate that 1) the \gls{model} is able to recover the performance of the regular \gls{pt-tnp} when no in-context datasets are provided; and 2) the \gls{model} is able to perform in-context in-context learning effectively.

We compare the predictive distributions of the \gls{model} and \gls{pt-tnp} in \Cref{fig:icicl-gp-figs}. With only 10 context points, the \gls{pt-tnp} has insufficient information to infer the periodicity of the underlying stochastic process, so it fails to both extrapolate beyond or interpolate between the data effectively. In contrast, when an in-context dataset drawn from the same periodic kernel is provided, the \gls{model} is able to accurately model the posterior predictive map of the ground-truth periodic stochastic process. Not only does this demonstrate the importance of in-context in-context learning, it also demonstrates the effectiveness of the \gls{model} in realising it. 
Additional experiments are provided in \Cref{app:synthetic-1d}, illustrating the effect on the predictions of the \gls{model} of conditioning on in-context datasets drawn from a different stochastic process.

We also trained an ICICL-\gls{cnp} model on this task, but we did not observe any benefits from in-context in-context learning. This is in contrast to what we observe using \gls{model} on this task, as well as what we observe in \Cref{sec:image_completion}. Thus, we hypothesise that the failure of the ICICL-\gls{cnp} model to perform in-context in-context learning on GP synthetic regression arises from its limited capacity relative to \gls{model} and from the difficulty of the task.

\paragraph{Out-of-distribution (OOD) testing} We also investigated the behaviour of the \gls{model}, as compared to \gls{pt-tnp}, when tested OOD--during testing we sampled the kernel hyperparameter according to $\log \ell \sim \mcU_{[\log 0.1, \log 0.25]  \cup [\log 4, \log 10]}$ for both the context and in-context datasets. \Cref{fig:ic-gp-performance-OOD} shows that in-context learning improves predictive performance even if at test time the samples come from stochastic processes the model has not been trained on. This is reflected in \Cref{fig:icicl-gp-figs-OOD}, where the context datapoints come from a GP with a periodic kernel and $\ell = 6.08$. The \gls{model} is better than the \gls{pt-tnp} at capturing in its uncertainty the slowly-varying variations characteristic to this kernel. We show more examples in \Cref{app:synthetic-1d}.

\begin{figure}[htbp]
\begin{minipage}[t]{0.45\textwidth}
    \begin{minipage}[t]{\linewidth}
        \begin{tabular}{lc} 
        \toprule
        \textbf{Model}     & \multicolumn{1}{l}{\textbf{Log lik. ($\uparrow$)}}  \\
        \midrule
        CNP  & $-0.812 \pm 0.005$ \\
        \gls{pt-tnp} & $-0.598 \pm 0.005$ \\
        \gls{model} (0)  & $-0.607 \pm 0.005$ \\
        \gls{model} (1) & $-0.499 \pm 0.005$ \\
        \gls{model} (2) & $-0.474 \pm 0.005$ \\
        \gls{model} (3) & $-0.469 \pm 0.005$ \\
        \gls{model} (4) & $-0.467 \pm 0.005$ \\
        \gls{model} (5)& $\mathbf{-0.466 \pm 0.005}$ \\
        \bottomrule
        \end{tabular}
        \captionof{table}{Comparison of the predictive performance (in terms of test log likelihood) between the \gls{cnp}, \gls{pt-tnp}, and the \gls{model} with varying number of in-context datasets (indicated within brackets).}
        \label{fig:ic-gp-performance}
    \end{minipage}
\end{minipage}%
\hfill
\begin{minipage}[t]{0.53\textwidth}
\vspace{-2.6cm}
      {%
        \subfigure[Regular \gls{pt-tnp}.][c]{\label{fig:lbanp-ic-gp}%
          \includegraphics[width=\linewidth,trim={0.5cm 0.5cm 0.5cm 0.5cm},clip]{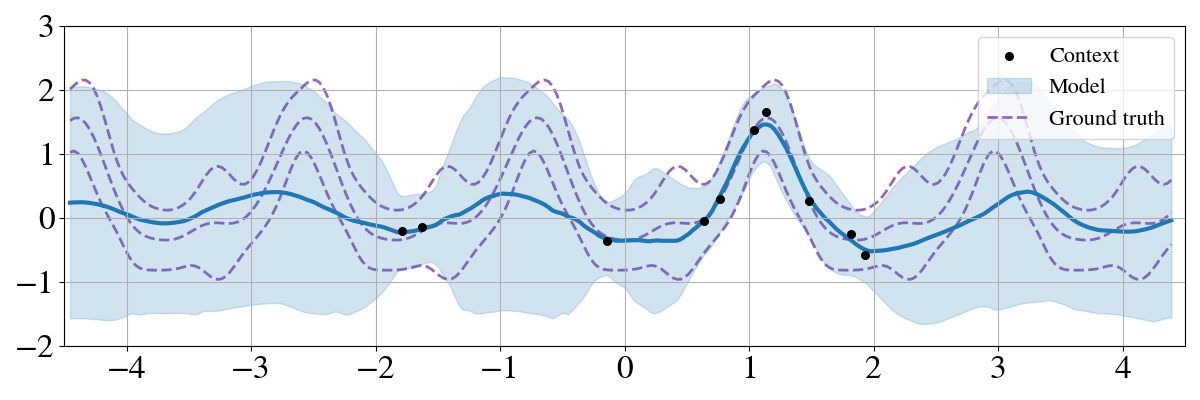}}%
        \\
        \subfigure[\gls{model}.][c]{\label{fig:iclbanp-ic-gp-0}%
          \includegraphics[width=\linewidth,trim={0.5cm 0.5cm 0.5cm 0.5cm},clip]{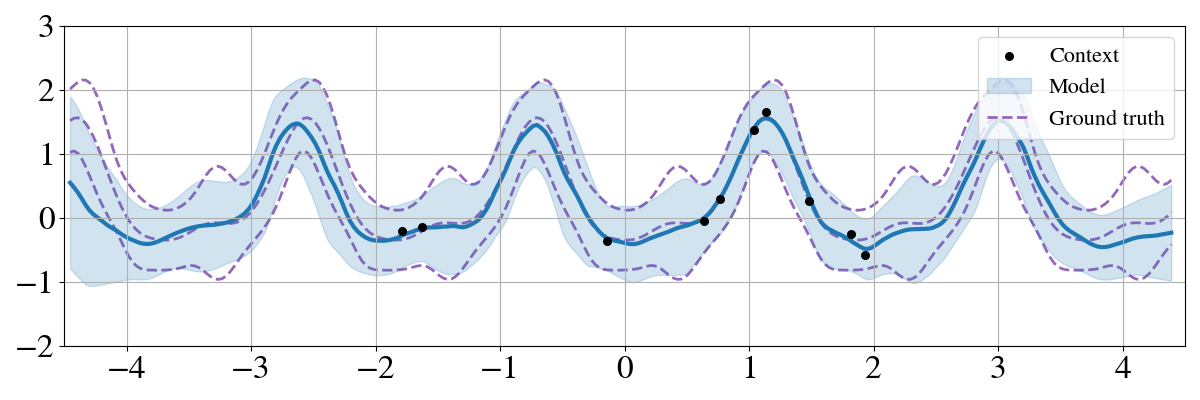}}%
      }
      {\caption{The difference between the predictive distributions of the regular \gls{pt-tnp} and the \gls{model} when conditioning on three in-context datasets with 128 datapoints (not shown here).}
      \label{fig:icicl-gp-figs}}
     
\end{minipage}
\end{figure}

\begin{figure}[htbp]
\begin{minipage}[t]{0.45\textwidth}
    \begin{minipage}[t]{\linewidth}
        \begin{tabular}{lc} 
        \toprule
        \textbf{Model}     & \multicolumn{1}{l}{\textbf{Log lik. ($\uparrow$)}}  \\
        \midrule
        CNP  & $-0.880 \pm 0.006$ \\
        \gls{pt-tnp} & $-0.798 \pm 0.007$ \\
        \gls{model} (0)  & $-0.783 \pm 0.007$ \\
        \gls{model} (1) & $-0.721 \pm 0.006$ \\
        \gls{model} (2) & $-0.702 \pm 0.006$ \\
        \gls{model} (3) & $\mathbf{-0.700 \pm 0.006}$ \\
        \bottomrule
        \end{tabular}
        \captionof{table}{Comparison of the predictive performance (in terms of test log likelihood) when tested OOD between the \gls{cnp}, \gls{pt-tnp}, and the \gls{model} with varying number of in-context datasets (indicated within brackets).}
        \label{fig:ic-gp-performance-OOD}
    \end{minipage}
\end{minipage}%
\hfill
\begin{minipage}[t]{0.53\textwidth}
\vspace{-2.1cm}
      {%
        \subfigure[Regular \gls{pt-tnp}.][c]{\label{fig:lbanp-ic-gp-OOD}%
          \includegraphics[width=\linewidth,trim={0.5cm 0.5cm 0.5cm 0.5cm},clip]{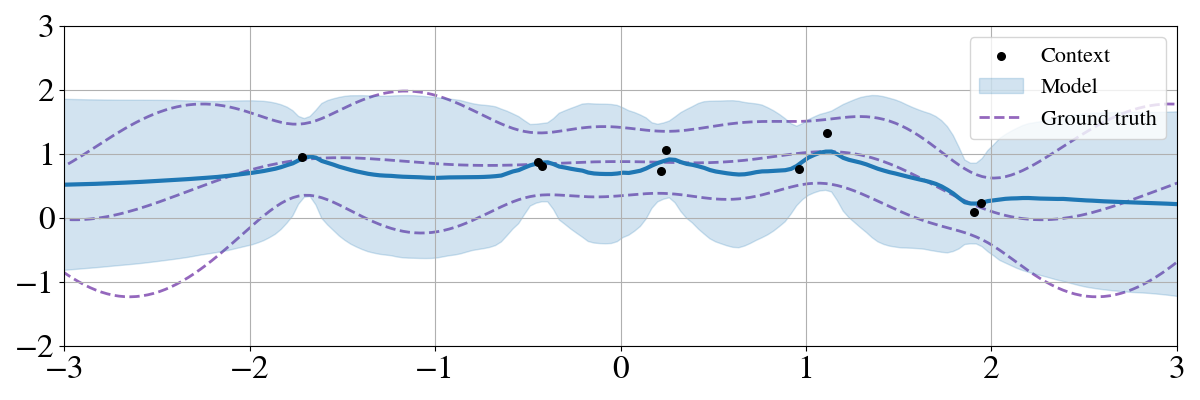}}%
        \\
        \subfigure[\gls{model}.][c]{\label{fig:iclbanp-ic-gp-OOD}%
          \includegraphics[width=\linewidth,trim={0.5cm 0.5cm 0.5cm 0.5cm},clip]{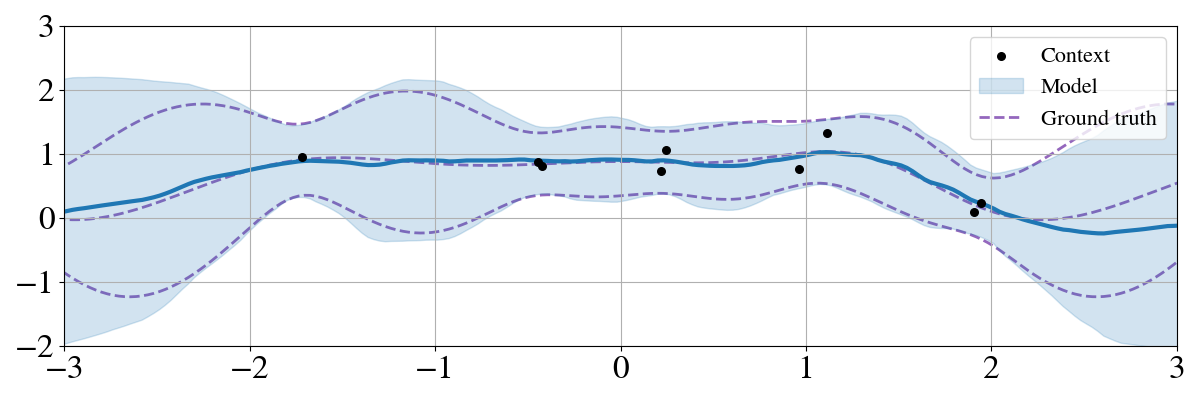}}%
      }
      {\caption{The difference between the predictive distributions when tested OOD of the regular \gls{pt-tnp} and the \gls{model} when conditioning on three in-context datasets. The context datapoints come from a GP with a periodic kernel with $\ell = 6.08$.}
      \label{fig:icicl-gp-figs-OOD}}
\end{minipage}
\end{figure}

\subsection{Image Completion}
\label{sec:image_completion}
We consider an image in-painting experiment using the MNIST dataset \citep{lecun1998gradient}. Each MNIST image can be interpreted as spatial regression of pixel values $\bfy_n \in \R$ given a 2-D pixel location $\bfx_n \in \R^2$. We construct context datasets by sampling the number of pixels according to $N_c \sim \mcU\{N/100, N/5\}$, where $N=784$ denotes the total number of pixels in each image, and set $N_t = N - N_c$. For each context dataset, we sample in-context datasets from other images of the same label with the number of in-context datasets sampled as $N_{ic} \sim \mcU\{0, 3\}$, and sample the number of datapoints for each in-context dataset as $N_{ic, c} \sim \mcU\{N/100, N/2\}$. \Cref{fig:ic-mnist-performance} compares the predictive performance of the regular \gls{pt-tnp} with the \gls{model} as $N_{ic}$ and $N_c / N$ vary. We also provide results for the ICICL-\gls{cnp} and \gls{cnp}.

\begin{figure}
\begin{minipage}[b]{0.38\textwidth}
        \centering
        \includegraphics[width=\textwidth]{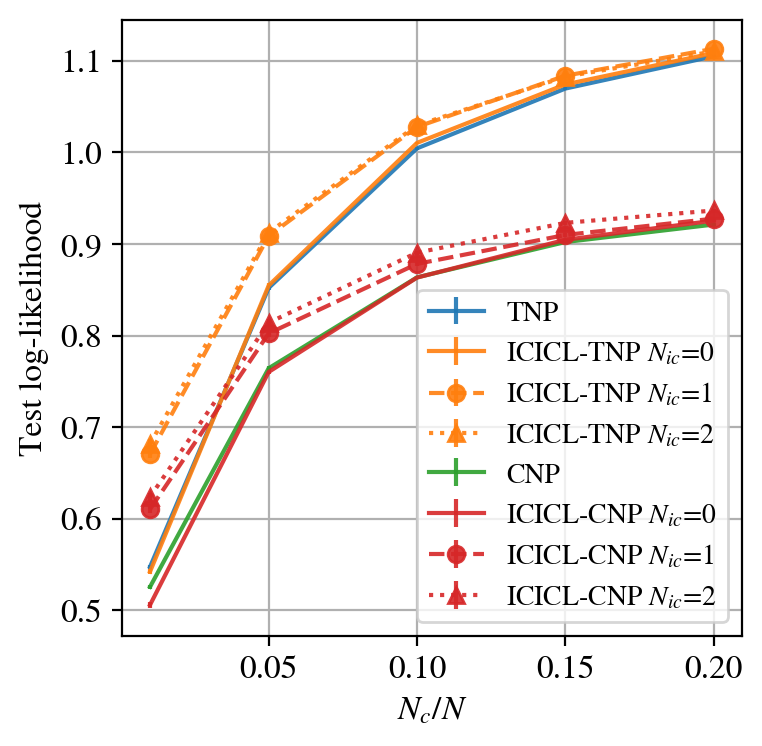}
        \caption{A comparison between the predictive performance of the \gls{model}, regular \gls{pt-tnp}, ICICL-\gls{cnp} and \gls{cnp} as the proportion of context datapoints $N_c / N$ varies in the MNIST in-painting experiment.}
        \label{fig:ic-mnist-performance}
\end{minipage}
\hfill
\begin{minipage}[b]{0.58\textwidth}
    \floatconts
      {fig:ic-mnist-comparison}
      {\caption{A comparison between the predictive distribution of the \gls{model} and the regular \gls{pt-tnp} when conditioning on the context and in-context dataset shown.}}
      {%
        \subfigure[Regular \gls{pt-tnp}.][c]{\label{fig:lbanp-ic-mnist}%
          \includegraphics[width=\linewidth,trim={3cm 0.5cm 3cm 0.5cm},clip]{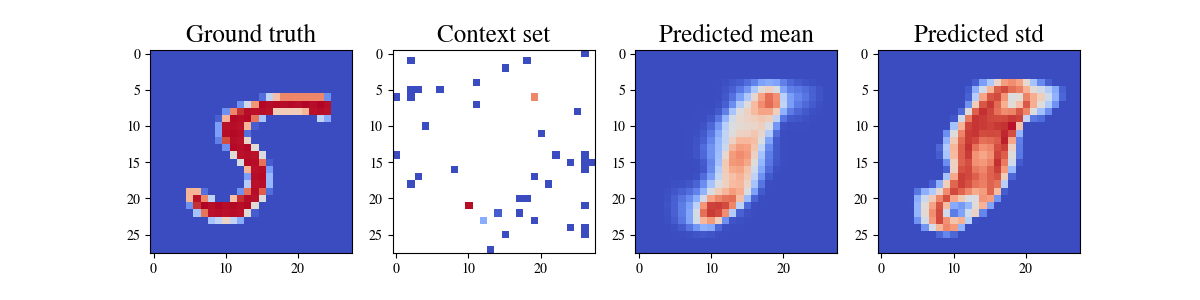}}%
        \\
        \subfigure[\gls{model}.][c]{\label{fig:iclbanp-ic-mnist-0}%
          \includegraphics[width=\linewidth,trim={3cm 0.5cm 3cm 0.5cm},clip]{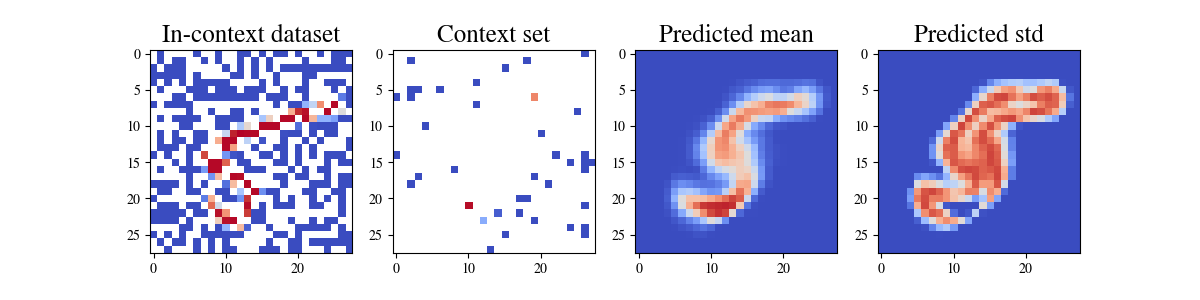}}%
      }
\end{minipage}
\end{figure}

We observe that the \gls{model} recovers the performance of the \gls{pt-tnp} when conditioning on no in-context datasets, and that, as expected, the gains in performance of the \gls{model} plateau with increasing $N_{ic}$. In \Cref{fig:ic-mnist-comparison}, we compare the predictive distributions of the regular \gls{tnp} and \gls{model} for a single dataset. As very few pixels are observed in the context dataset, the regular \gls{tnp} is unconfident in its predictions, as the context dataset could correspond to datasets sampled from several MNIST digits. With the addition of an in-context dataset, the \gls{model} is able to infer pixel values much closer to the ground truth. We provide additional comparisons in \Cref{app:image_completion}, showcasing examples where the in-context dataset is drawn from a different stochastic process to the context dataset. 

\subsection{Environmental Data}
In this experiment, we model a real-world environmental regression problem derived from ERA5 \citep{cccs2020}. We consider the surface air temperature across both space and time within central Europe (latitude / longitude range of $[42^{\circ},\ 53^{\circ}]$ / $[8^{\circ},\ 28^{\circ}]$). During training, datasets spanning 18 hours (one sample every six hours) and $5^{\circ}$ (with a resolution of $0.5^{\circ}$) across each axis are sampled. In-context datasets are obtained from the same region, but non-overlapping regions in time. Once the spatio-temporal location has been sampled, the number of context points is sampled as $N_c \sim \mcU\{N/100, N/3\}$, where $N = 300$ denotes the maximum number of measurements in a single dataset. The number of in-context datasets is sampled as $N_{ic} \sim \mcU\{0, 2\}$, with the number of datapoints within each in-context dataset sampled as $N_{ic, j} \sim \mcU\{N/100, N/5\}$. The number of target points is set to $N_t = N - N_c$. We train on data from the first six months of 2019, and test on the latter six months. \Cref{tab:environmental_data} compares the predictive performance of the \gls{model} with the regular \gls{pt-tnp}. As with the previous two experiments, the \gls{model} is able to recover the performance of the \gls{pt-tnp} with no in-context datasets, and outperforms the \gls{pt-tnp} when in-context datasets are provided.

\begin{table}[htbp]
    \centering
    \begin{tabular}{l c}
    \toprule
         \textbf{Model} & \textbf{Test Log-Likelihood}  \\
         \midrule
         \gls{pt-tnp} & $1.15 \pm 0.01$ \\
         \gls{model} (0) & $1.15 \pm 0.01$ \\
         \gls{model} (1) & $1.18 \pm 0.01$ \\
         \gls{model} (2) & $1.19 \pm 0.01$ \\
         \bottomrule
    \end{tabular}
    \caption{Comparison of the test log-likelihood on the environmental data for the \gls{pt-tnp} and \gls{model} with varying number of in-context datasets (indicated within brackets).}
    \label{tab:environmental_data}
\end{table}

\section{Conclusion}
\label{sec:conclusion}
We have introduced the paradigm of in-context in-context learning, in which multiple datasets are used for predictive inference on a dataset drawn from the same stochastic process. We developed a \gls{np} for performing in-context in-context learning, the \gls{model}, which utilises pseudo-token-based transformer architectures for the computationally efficient handling of both context and in-context datasets with potentially many datapoints in each. Through a set of synthetic and real-world experiments, we demonstrate the importance of in-context in-context learning and the effectiveness of the \gls{model} in performing it. Further, we demonstrate that the \gls{model} is able to recover the performance of an equivalent \gls{pt-tnp} when no in-context datasets are provided. A requirement of in-context in-context learning generally is for practitioners to be able to identify datasets drawn from the same stochastic process, which in turn limits the applicability of the \gls{model}. Nonetheless, we believe that by establishing the benefit of in-context in-context learning, new machine learning applications can be developed that utilise it. For example, one can envision developing image generation tools which can be provided with additional images that the user wishes the generated sample to be similar to. We are excited by the abundance of potential directions in which to take this research, and look forward to exploring them in future work.

\acks{
CD is supported by the Cambridge Trust Scholarship. AW acknowledges support from a Turing AI fellowship under grant EP/V025279/1 and the Leverhulme Trust via CFI. RET is supported by gifts from Google, Amazon, ARM, Improbable and EPSRC grant EP/T005386/1.
}

\bibliography{bibliography}

\begin{thebibliography}{22}
\providecommand{\natexlab}[1]{#1}
\providecommand{\url}[1]{\texttt{#1}}
\expandafter\ifx\csname urlstyle\endcsname\relax
  \providecommand{\doi}[1]{doi: #1}\else
  \providecommand{\doi}{doi: \begingroup \urlstyle{rm}\Url}\fi

\bibitem[Borgeaud et~al.(2022)Borgeaud, Mensch, Hoffmann, Cai, Rutherford, Millican, Van Den~Driessche, Lespiau, Damoc, Clark, et~al.]{borgeaud2022improving}
Sebastian Borgeaud, Arthur Mensch, Jordan Hoffmann, Trevor Cai, Eliza Rutherford, Katie Millican, George~Bm Van Den~Driessche, Jean-Baptiste Lespiau, Bogdan Damoc, Aidan Clark, et~al.
\newblock Improving language models by retrieving from trillions of tokens.
\newblock In \emph{International conference on machine learning}, pages 2206--2240. PMLR, 2022.

\bibitem[{Copernicus Climate Change Service}(2020)]{cccs2020}
{Copernicus Climate Change Service}.
\newblock Near surface meteorological variables from 1979 to 2018 derived from bias-corrected reanalysis, 2020.

\bibitem[Feng et~al.(2022)Feng, Hajimirsadeghi, Bengio, and Ahmed]{feng2022latent}
Leo Feng, Hossein Hajimirsadeghi, Yoshua Bengio, and Mohamed~Osama Ahmed.
\newblock Latent bottlenecked attentive neural processes.
\newblock \emph{arXiv preprint arXiv:2211.08458}, 2022.

\bibitem[Foong et~al.(2020)Foong, Bruinsma, Gordon, Dubois, Requeima, and Turner]{foong2020meta}
Andrew Foong, Wessel Bruinsma, Jonathan Gordon, Yann Dubois, James Requeima, and Richard Turner.
\newblock Meta-learning stationary stochastic process prediction with convolutional neural processes.
\newblock \emph{Advances in Neural Information Processing Systems}, 33:\penalty0 8284--8295, 2020.

\bibitem[Garnelo et~al.(2018{\natexlab{a}})Garnelo, Rosenbaum, Maddison, Ramalho, Saxton, Shanahan, Teh, Rezende, and Eslami]{garnelo2018conditional}
Marta Garnelo, Dan Rosenbaum, Christopher Maddison, Tiago Ramalho, David Saxton, Murray Shanahan, Yee~Whye Teh, Danilo Rezende, and SM~Ali Eslami.
\newblock Conditional neural processes.
\newblock In \emph{International conference on machine learning}, pages 1704--1713. PMLR, 2018{\natexlab{a}}.

\bibitem[Garnelo et~al.(2018{\natexlab{b}})Garnelo, Schwarz, Rosenbaum, Viola, Rezende, Eslami, and Teh]{garnelo2018neural}
Marta Garnelo, Jonathan Schwarz, Dan Rosenbaum, Fabio Viola, Danilo~J Rezende, SM~Eslami, and Yee~Whye Teh.
\newblock Neural processes.
\newblock \emph{arXiv preprint arXiv:1807.01622}, 2018{\natexlab{b}}.

\bibitem[Jaegle et~al.(2021)Jaegle, Gimeno, Brock, Vinyals, Zisserman, and Carreira]{jaegle2021perceiver}
Andrew Jaegle, Felix Gimeno, Andy Brock, Oriol Vinyals, Andrew Zisserman, and Joao Carreira.
\newblock Perceiver: General perception with iterative attention.
\newblock In \emph{International conference on machine learning}, pages 4651--4664. PMLR, 2021.

\bibitem[Jha et~al.(2022)Jha, Gong, Wang, Turner, and Yao]{jha2022neural}
Saurav Jha, Dong Gong, Xuesong Wang, Richard~E Turner, and Lina Yao.
\newblock The neural process family: Survey, applications and perspectives.
\newblock \emph{arXiv preprint arXiv:2209.00517}, 2022.

\bibitem[Ke et~al.(2022)Ke, Chiappa, Wang, Goyal, Bornschein, Rey, Weber, Botvinic, Mozer, and Rezende]{ke2022learning}
Nan~Rosemary Ke, Silvia Chiappa, Jane Wang, Anirudh Goyal, Jorg Bornschein, Melanie Rey, Theophane Weber, Matthew Botvinic, Michael Mozer, and Danilo~Jimenez Rezende.
\newblock Learning to induce causal structure.
\newblock \emph{arXiv preprint arXiv:2204.04875}, 2022.

\bibitem[Kim et~al.(2021)Kim, Cho, Lee, and Hong]{kim2021multi}
Donggyun Kim, Seongwoong Cho, Wonkwang Lee, and Seunghoon Hong.
\newblock Multi-task neural processes.
\newblock \emph{arXiv preprint arXiv:2110.14953}, 2021.

\bibitem[Kim et~al.(2019)Kim, Mnih, Schwarz, Garnelo, Eslami, Rosenbaum, Vinyals, and Teh]{kim2019attentive}
Hyunjik Kim, Andriy Mnih, Jonathan Schwarz, Marta Garnelo, Ali Eslami, Dan Rosenbaum, Oriol Vinyals, and Yee~Whye Teh.
\newblock Attentive neural processes.
\newblock \emph{arXiv preprint arXiv:1901.05761}, 2019.

\bibitem[LeCun et~al.(1998)LeCun, Bottou, Bengio, and Haffner]{lecun1998gradient}
Yann LeCun, L{\'e}on Bottou, Yoshua Bengio, and Patrick Haffner.
\newblock Gradient-based learning applied to document recognition.
\newblock \emph{Proceedings of the IEEE}, 86\penalty0 (11):\penalty0 2278--2324, 1998.

\bibitem[Lee et~al.(2019)Lee, Lee, Kim, Kosiorek, Choi, and Teh]{lee2019set}
Juho Lee, Yoonho Lee, Jungtaek Kim, Adam Kosiorek, Seungjin Choi, and Yee~Whye Teh.
\newblock Set transformer: A framework for attention-based permutation-invariant neural networks.
\newblock In \emph{International conference on machine learning}, pages 3744--3753. PMLR, 2019.

\bibitem[Loshchilov and Hutter(2017)]{loshchilov2017decoupled}
Ilya Loshchilov and Frank Hutter.
\newblock Decoupled weight decay regularization.
\newblock \emph{arXiv preprint arXiv:1711.05101}, 2017.

\bibitem[Nguyen and Grover(2022)]{nguyen2022transformer}
Tung Nguyen and Aditya Grover.
\newblock Transformer neural processes: Uncertainty-aware meta learning via sequence modeling.
\newblock \emph{arXiv preprint arXiv:2207.04179}, 2022.

\bibitem[Shen et~al.(2024)Shen, Zhen, Wang, and Worring]{shen2024episodic}
Jiayi Shen, Xiantong Zhen, Qi~Wang, and Marcel Worring.
\newblock Episodic multi-task learning with heterogeneous neural processes.
\newblock \emph{Advances in Neural Information Processing Systems}, 36, 2024.

\bibitem[Vaswani et~al.(2017)Vaswani, Shazeer, Parmar, Uszkoreit, Jones, Gomez, Kaiser, and Polosukhin]{vaswani2017attention}
Ashish Vaswani, Noam Shazeer, Niki Parmar, Jakob Uszkoreit, Llion Jones, Aidan~N Gomez, {\L}ukasz Kaiser, and Illia Polosukhin.
\newblock Attention is all you need.
\newblock \emph{Advances in neural information processing systems}, 30, 2017.

\bibitem[Wagstaff et~al.(2022)Wagstaff, Fuchs, Engelcke, Osborne, and Posner]{wagstaff2022universal}
Edward Wagstaff, Fabian~B Fuchs, Martin Engelcke, Michael~A Osborne, and Ingmar Posner.
\newblock Universal approximation of functions on sets.
\newblock \emph{The Journal of Machine Learning Research}, 23\penalty0 (1):\penalty0 6762--6817, 2022.

\bibitem[Xu et~al.(2023)Xu, Zhu, and Clifton]{xu2023multimodal}
Peng Xu, Xiatian Zhu, and David~A Clifton.
\newblock Multimodal learning with transformers: A survey.
\newblock \emph{IEEE Transactions on Pattern Analysis and Machine Intelligence}, 2023.

\bibitem[Xu et~al.(2022)Xu, Zhao, Vineet, Lim, and Torralba]{xu2022mtformer}
Xiaogang Xu, Hengshuang Zhao, Vibhav Vineet, Ser-Nam Lim, and Antonio Torralba.
\newblock Mtformer: Multi-task learning via transformer and cross-task reasoning.
\newblock In \emph{European Conference on Computer Vision}, pages 304--321. Springer, 2022.

\bibitem[Zaheer et~al.(2017)Zaheer, Kottur, Ravanbakhsh, Poczos, Salakhutdinov, and Smola]{zaheer2017deep}
Manzil Zaheer, Satwik Kottur, Siamak Ravanbakhsh, Barnabas Poczos, Russ~R Salakhutdinov, and Alexander~J Smola.
\newblock Deep sets.
\newblock \emph{Advances in neural information processing systems}, 30, 2017.

\bibitem[Zhang and Yan(2022)]{zhang2022crossformer}
Yunhao Zhang and Junchi Yan.
\newblock Crossformer: Transformer utilizing cross-dimension dependency for multivariate time series forecasting.
\newblock In \emph{The eleventh international conference on learning representations}, 2022.

\end{thebibliography}

\appendix

\section{Proof of Theorem 1}
\label{app:icicl_proof}
In this section we give a detailed proof of \Cref{thm:icicl}, which we repeat here for completeness.

\addtocounter{theorem}{-1}
\begin{theorem}[In-context in-context learning]
    Let $\xi_i \sim p(\xi)$, $\mcD_i, \{\mcD_j\} \sim P(\xi_i)$. Let $p(\bfy | \bfx, \mcD_i, \xi_i)$ be the marginal posterior distribution of $P(\xi_i)$ given $\mcD_i$, $p(\bfy | \bfx, \mcD_i, \{\mcD_j\})$ be the marginal posterior distribution of the stochastic process $P$ given $\mcD_i$ and $\{\mcD_j\}$, and $p(\bfy | \bfx, \mcD_i)$ be the marginal posterior distribution of the stochastic process $P$ given $\mcD_i$. Then,
    \begin{equation*}
        \Exp{\mcD_i, \{\mcD_j\}, \xi_i}{\KL{p(\bfy | \bfx, \mcD_i, \xi_i)}{p(\bfy | \bfx, \mcD_i, \{\mcD_j\})}} \leq \Exp{\mcD_i, \xi_i}{\KL{p(\bfy | \bfx, \mcD_i, \xi_i)}{p(\bfy | \bfx, \mcD_i)}}.
    \end{equation*}
\end{theorem}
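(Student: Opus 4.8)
The plan is to expand each averaged KL divergence into an entropy term plus a cross-entropy term, use the conditional independence $\mcD_i \ci \{\mcD_j\} \given \xi_i$ to rewrite the cross-entropy terms as entropies of the predictive distributions, and then invoke the fact that conditioning cannot increase entropy on average.

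First I would write, for each choice $q \in \{p(\bfy | \bfx, \mcD_i, \{\mcD_j\}),\ p(\bfy | \bfx, \mcD_i)\}$, the pointwise identity $\KL{p(\bfy | \bfx, \mcD_i, \xi_i)}{q} = -\entropy{p(\bfy | \bfx, \mcD_i, \xi_i)} - \Exp{p(\bfy | \bfx, \mcD_i, \xi_i)}{\log q}$. After taking the outer expectations, the leading term $-\Exp{\mcD_i, \xi_i}{\entropy{p(\bfy | \bfx, \mcD_i, \xi_i)}}$ is identical on both sides: it does not involve $q$, and since the entropy does not depend on $\{\mcD_j\}$ the extra $\{\mcD_j\}$-average is vacuous. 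Hence it cancels when forming the difference, and it suffices to compare the two cross-entropy contributions.

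The crux is rewriting the left-hand cross-entropy term. Here I would use $p(\bfy | \bfx, \mcD_i, \xi_i) = p(\bfy | \bfx, \mcD_i, \{\mcD_j\}, \xi_i)$, which holds precisely because $\mcD_i$ and $\{\mcD_j\}$ are conditionally independent given $\xi_i$. Marginalizing this density against the posterior $p(\xi_i \given \mcD_i, \{\mcD_j\})$ recovers exactly $p(\bfy | \bfx, \mcD_i, \{\mcD_j\})$, so the nested expectation over $(\xi_i, \bfy)$ collapses and the term becomes $\Exp{\mcD_i, \{\mcD_j\}}{\entropy{p(\bfy | \bfx, \mcD_i, \{\mcD_j\})}}$. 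The same marginalization-over-$\xi_i$ step applied on the right turns that cross-entropy term into $\Exp{\mcD_i}{\entropy{p(\bfy | \bfx, \mcD_i)}}$.

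The inequality then reduces to $\Exp{\mcD_i}{\Exp{\{\mcD_j\} \given \mcD_i}{\entropy{p(\bfy | \bfx, \mcD_i, \{\mcD_j\})}}} \leq \Exp{\mcD_i}{\entropy{p(\bfy | \bfx, \mcD_i)}}$, which is exactly the statement that conditioning reduces entropy: the left side is the conditional entropy of $\bfy$ given the extra variable $\{\mcD_j\}$ (alongside $\bfx, \mcD_i$), the right side is its marginal entropy, and the nonnegative gap is the mutual information between $\bfy$ and $\{\mcD_j\}$. I expect the main obstacle to be the bookkeeping in this cross-entropy rewriting: justifying that averaging the $\xi_i$-conditioned predictive against $p(\xi_i \given \mcD_i, \{\mcD_j\})$ yields the $\{\mcD_j\}$-conditioned predictive, and checking that the nested expectations over $(\bfy, \xi_i, \{\mcD_j\})$ are taken in a consistent order. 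A related subtlety worth flagging is that the entropy-reduction step is valid only after averaging over $\{\mcD_j\}$, not pointwise in $\{\mcD_j\}$, so the conditional-entropy framing is essential rather than cosmetic.
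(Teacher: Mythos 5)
Your proposal follows essentially the same route as the paper's proof: split each expected KL into a common term $-\Exp{\mcD_i,\xi_i}{\entropy{p(\bfy|\bfx,\mcD_i,\xi_i)}}$ plus a cross-entropy term, use the conditional independence $p(\bfy|\bfx,\mcD_i,\xi_i)=p(\bfy|\bfx,\mcD_i,\{\mcD_j\},\xi_i)$ to marginalize $\xi_i$ against its posterior and collapse each cross-entropy into a predictive entropy, and finish with the conditioning-reduces-entropy inequality. If anything, your closing caveat---that this last step is valid only in expectation over $\{\mcD_j\}\given\mcD_i$ and not pointwise---is more careful than the paper, whose proof asserts $\entropy{p(\bfy|\bfx,\mcD_i,\{\mcD_j\})}\leq\entropy{p(\bfy|\bfx,\mcD_i)}$ for all $\{\mcD_j\}$, a claim that can fail for individual realizations even though the averaged version it actually needs is true.
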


\begin{proof}
First, observe that both terms of the inequality can be split into two parts:
\begin{equation}
\label{eq:left-side}
    \begin{aligned}
    & \Exp{\mcD_i, \{\mcD_j\}, \xi_i}{\KL{p(\bfy | \bfx, \mcD_i, \xi_i)}{p(\bfy | \bfx, \mcD_i, \{\mcD_j\})}} \\
    & = \Exp{\mcD_i, \{\mcD_j\}, \xi_i}{\int \underbrace{p(\bfy | \bfx, \mcD_i, \xi_i) \log p(\bfy | \bfx, \mcD_i, \xi_i)}_{\text{term 1}} d\bfy - \underbrace{p(\bfy | \bfx, \mcD_i, \xi_i) \log p(\bfy | \bfx, \mcD_i, \{\mcD_j\})}_{\text{term 2}} d\bfy}
    \end{aligned}
\end{equation}
and
\begin{equation}
\label{eq:right-side}
    \begin{aligned}
    & \Exp{\mcD_i, \xi_i}{\KL{p(\bfy | \bfx, \mcD_i, \xi_i)}{p(\bfy | \bfx, \mcD_i)}} \\
    & = \Exp{\mcD_i, \xi_i}{\int \underbrace{p(\bfy | \bfx, \mcD_i, \xi_i) \log p(\bfy | \bfx, \mcD_i, \xi_i)}_{\text{term 1}} d\bfy - \underbrace{p(\bfy | \bfx, \mcD_i, \xi_i) \log p(\bfy | \bfx, \mcD_i)}_{\text{term 2}} d\bfy}.
    \end{aligned}
\end{equation}
We first show that term 1 is the same for both sides of the inequality:
\begin{align}
\label{eq:first_term}
    \nonumber &\Exp{\mcD_i, \{\mcD_j\}, \xi_i}{\int p(\bfy | \bfx, \mcD_i, \xi_i) \log p(\bfy | \bfx, \mcD_i, \xi_i) d\bfy} \\
    \nonumber & = \Exp{ \xi_i \sim p(\xi), \mcD_i \sim P(\xi_i), \{\mcD_j\} \sim P(\xi_i)}{\int p(\bfy | \bfx, \mcD_i, \xi_i) \log p_(\bfy | \bfx, \mcD_i, \xi_i) d\bfy} \\
    \nonumber & = \Exp{ \xi_i \sim p(\xi), \mcD_i \sim P(\xi_i)}{\int p(\bfy | \bfx, \mcD_i, \xi_i) \log p_(\bfy | \bfx, \mcD_i, \xi_i) d\bfy} \\
    \nonumber &= \Exp{\mcD_i, \xi_i}{\int p(\bfy | \bfx, \mcD_i, \xi_i) \log p(\bfy | \bfx, \mcD_i, \xi_i) d\bfy} \\
    & = - \Exp{\mcD_i, \xi_i}{\entropy{p(\bfy | \bfx, \mcD_i, \xi_i)}}
\end{align}
where to go from the second to the third line we leveraged the fact that the term within the expectation does not depend on $\{\mcD_j\}$, and can, hence, be integrated out.
Thus, the elements of the inequality only differ in term 2. Consider term 2 in \Cref{eq:left-side}:
\begin{align*}
    &\Exp{\mcD_i, \{\mcD_j\}, \xi_i}{\int p(\bfy | \bfx, \mcD_i, \xi_i) \log p(\bfy | \bfx, \mcD_i, \{\mcD_j\}) d\bfy}  \\
    & =  \Exp{\mcD_i, \{\mcD_j\}}
    {\Exp{\xi_i \sim p(\xi | \mcD_i, \{ \mcD_j \})}{\int p(\bfy | \bfx, \mcD_i, \xi_i) \log p(\bfy | \bfx, \mcD_i, \{\mcD_j\}) d\bfy}}
\end{align*}
Note that $p(\bfy | \bfx, \mcD_i, \xi_i) = p(\bfy | \bfx, \mcD_i, \{\mcD_j\}, \xi_i)$---considering that $\mcD_i$ and $\{\mcD_j\}$ are conditionally independent given $\xi_i$. Informally, this means that additionally conditioning on the in-context datasets $\{\mcD_j\}$ does not give us any extra information, given that we are already considering the marginal posterior of $P(\xi_i)$. Hence,
\begin{align}
\label{eq:second_term1}
    \nonumber &\Exp{\mcD_i, \{\mcD_j\}, \xi_i}{\int p(\bfy | \bfx, \mcD_i, \xi_i) \log p(\bfy | \bfx, \mcD_i, \{\mcD_j\}) d\bfy} \\
    \nonumber & = \Exp{\mcD_i, \{\mcD_j\}}
    {\Exp{\xi_i \sim p(\xi | \mcD_i, \{\mcD_j\})}{\int p(\bfy | \bfx, \mcD_i, \{\mcD_j\}, \xi_i) \log p(\bfy | \bfx, \mcD_i, \{\mcD_j\}) d\bfy}} \\
    \nonumber & = \Exp{\mcD_i, \{\mcD_j\}}
    {\int p(\bfy | \bfx, \mcD_i, \{\mcD_j\}) \log p(\bfy | \bfx, \mcD_i, \{\mcD_j\}) d\bfy}\\
    \nonumber & = - \Exp{\mcD_i, \{\mcD_j\}}{\entropy{p(\bfy | \bfx, \mcD_i, \{\mcD_j\})}} \\
    & = - \Exp{\mcD_i}{\Exp{\{\mcD_j\} | \mcD_i}{\entropy{p(\bfy | \bfx, \mcD_i, \{\mcD_j\})}}}
\end{align}
where we used Fubini's theorem to go from the second to the third line. Similarly, for term 2 of \Cref{eq:right-side} we have:
\begin{align}
\label{eq:second_term2}
    \nonumber &\Exp{\mcD_i, \xi_i}{\int p(\bfy | \bfx, \mcD_i, \xi_i) \log p(\bfy | \bfx, \mcD_i) d\bfy} \\
    \nonumber & = \Exp{\mcD_i}
    {\Exp{\xi_i \sim p(\xi | \mcD_i)}{\int p(\bfy | \bfx, \mcD_i, \xi_i) \log p(\bfy | \bfx, \mcD_i) d\bfy}} \\
    \nonumber & = \Exp{\mcD_i}
    {\int p(\bfy | \bfx, \mcD_i) \log p(\bfy | \bfx, \mcD_i) d\bfy}\\
    & = - \Exp{\mcD_i}{\entropy{p(\bfy | \bfx, \mcD_i)}}.
\end{align}
Putting the results from \Cref{eq:first_term} and \Cref{eq:second_term1} together we obtain
\begin{align}
    &\Exp{\mcD_i, \{\mcD_j\}, \xi_i}{\KL{p(\bfy | \bfx, \mcD_i, \{\mcD_j\}, \xi_i)}{p(\bfy | \bfx, \mcD_i)}} \\
    & = - \Exp{\mcD_i, \xi_i}{\entropy{p(\bfy | \bfx, \mcD_i, \xi_i)}} + \Exp{\mcD_i}{\Exp{\{\mcD_j\} | \mcD_i}{\entropy{p(\bfy | \bfx, \mcD_i, \{\mcD_j\})}}},
\end{align}
and putting \Cref{eq:first_term} and \Cref{eq:second_term2} together we have
\begin{align}
    &\Exp{\mcD_i, \xi_i}{\KL{p(\bfy | \bfx, \mcD_i, \xi_i)}{p(\bfy | \bfx, \mcD_i)}} \\
    & = - \Exp{\mcD_i, \xi_i}{\entropy{p(\bfy | \bfx, \mcD_i, \xi_i)}} + \Exp{\mcD_i}{\entropy{p(\bfy | \bfx, \mcD_i)}}.
\end{align}
Since conditioning reduces entropy $\entropy{p(\bfy | \bfx, \mcD_i, \{\mcD_j\}} \leq \entropy{p(\bfy | \bfx, \mcD_i)} \ \forall \{\mcD_j\} \sim P(\xi_i)$, we have
\begin{equation*}
\resizebox{\linewidth}{!}{
$\displaystyle
\Exp{\mcD_i}{\Exp{\{\mcD_j\} | \mcD_i}{\entropy{p(\bfy | \bfx, \mcD_i, \{\mcD_j\})}}} \leq \Exp{\mcD_i}{\Exp{\{\mcD_j\} | \mcD_i}{\entropy{p(\bfy | \bfx, \mcD_i)}}} = \Exp{\mcD_i}{\entropy{p(\bfy | \bfx, \mcD_i)}},
$
}
\end{equation*}
which leads to the result in \Cref{thm:icicl}.
\end{proof}

\section{Defining the MHSA and MHCA Operations}
\label{app:mhsa_mhca}
Let $\bfZ^{\ell} \in \R^{N\times D_z}$ denote the input set to the $\ell$-th MHSA operation. The MHSA operation updates the $n$\textsuperscript{th} token $\bfz^{\ell}_n$ as
\begin{equation}\textstyle
\label{eq:self-attention}
    \!\!\tilde{\bfz}^{\ell}_n \!=\! \operatorname{cat}\!\Big(\Big\{\sum_{m=1}^N \alpha^{\ell}_h(\bfz^{\ell}_n, \bfz^{\ell}_m) {\bfz^{\ell}_m\!}^T \bfW^{\ell}_{V, h}\Big\}_{h=1}^{H^{\ell}}\Big)\bfW^{\ell}_O
\end{equation}
where $\operatorname{cat}$ denotes the concatenation operation across the last dimension. Here, $\bfW^{\ell}_{V, h} \in \R^{D_z\times D_V}$ and $\bfW^{\ell}_{O} \in \R^{H^{\ell}D_V \times D_z}$ are the value and projection weight matrices, where $H^{\ell}$ denotes the number of `heads' in layer $\ell$. Note that permutation equivariance is achieved through the permutation invariant summation operator. As this is the only mechanism through which the tokens interact with each other, permutation equivariance for the overall model is ensured. The attention mechanism, $\alpha^{\ell}_h$, is implemented as
\begin{equation}
\label{eq:attention-mechanism}
    \alpha^{\ell}_h(\bfz^{\ell}_n, \bfz^{\ell}_m) = \frac{e^{{\bfz^{\ell}_n}^T\bfW^{\ell}_{Q, h}\left[\bfW^{\ell}_{K, h}\right]^T\bfz^{\ell}_m}}{\sum_{m=1}^N e^{{\bfz^{\ell}_n}^T\bfW^{\ell}_{Q, h}\left[\bfW^{\ell}_{K, h}\right]^T\bfz^{\ell}_m}}
\end{equation}
where $\bfW^{\ell}_{Q, h} \in \R^{D_z \times D_{QK}}$ and $\bfW^{\ell}_{K, h} \in \R^{D_z\times D_{QK}}$ are the query and key weight matrices. The softmax-normalisation ensures that $\sum_{m=1}^N \alpha^{\ell}_h(\bfz^{\ell}_n, \bfz^{\ell}_m) = 1 \ \forall n, h, \ell$.

Often, conditional independencies amongst the set of tokens---in the sense that the set $\{\bfz^{\ell}_n\}^{\ell=L}_{\ell=1}$ do not depend on the set $\{\bfz^{\ell}_m\}^{\ell=L}_{\ell=0}$ for some $n,\ m \in \{1, \ldots, N\}$---are desirable. This is typically achieved through masking, such that the pre-softmax activations are replaced by $\tilde{\alpha}^{\ell}_h$, where 
\begin{equation}
    \tilde{\alpha}^{\ell}_h(\bfz^{\ell}_n, \bfz^{\ell}_m) = \begin{cases}
        -\infty, & \text{$m\in A(n)$.} \\
        {\bfz^{\ell}_n}^T\bfW^{\ell}_{Q, h}\left[\bfW^{\ell}_{K, h}\right]^T\bfz^{\ell}_m, & \text{otherwise.}
    \end{cases}
\end{equation}
Here, $A(n) \subseteq \N_{\leq N}$ indexes the set of tokens we wish to make the update for token $\bfz^{\ell}_n$ independent of. If $A(n) = A$ (i.e.\ the same set of tokens are conditioned on for every $n$) then in practice it is more computationally efficient to use MHCA operations together with MHSA operations than it is to directly compute \Cref{eq:self-attention}. An MHCA operation uses the subset of tokens $\{\bfz^{\ell}_m | m\in A\}$ to update the complementary set of tokens $\{\bfz^{\ell}_n | n\in A^c\}$ in a computationally efficient manner:
\begin{equation}\textstyle
\label{eq:cross-attention}
    \!\!\tilde{\bfz}^{\ell}_n \!=\! \operatorname{cat}\!\Big(\Big\{\sum_{m\in A} \alpha^{\ell}_h(\bfz^{\ell}_n, \bfz^{\ell}_m) {\bfz^{\ell}_m\!}^T \bfW^{\ell}_{V, h}\Big\}_{h=1}^{H^{\ell}}\Big)\bfW^{\ell}_O \qquad \forall n \in A^c.
\end{equation}
For $N$ tokens that solely depend on a subset of $N_1$ tokens, the computational complexity is reduced from $\order{N^2}$ using masked-\gls{mhsa} operations to $\order{NN_1}$ using \gls{mhca} operations.

\section{Some \gls{tnp} Architectures}
\label{app:tnp_architectures}
There exist a number of architectures used in different members of the \gls{tnp}. We provide diagramatic illustrations of the following: the \gls{anp} of \cite{kim2019attentive} in \Cref{fig:anp}; the \gls{tnp} of \cite{nguyen2022transformer} in \Cref{fig:tnp}; the \gls{lbanp} of \cite{feng2022latent} in \Cref{fig:lbanp}; and the \gls{ist}-style \citep{lee2019set} \gls{tnp} in \Cref{fig:ist}.

\begin{figure}[htb]
    \centering
    \includegraphics[width=0.75\textwidth]{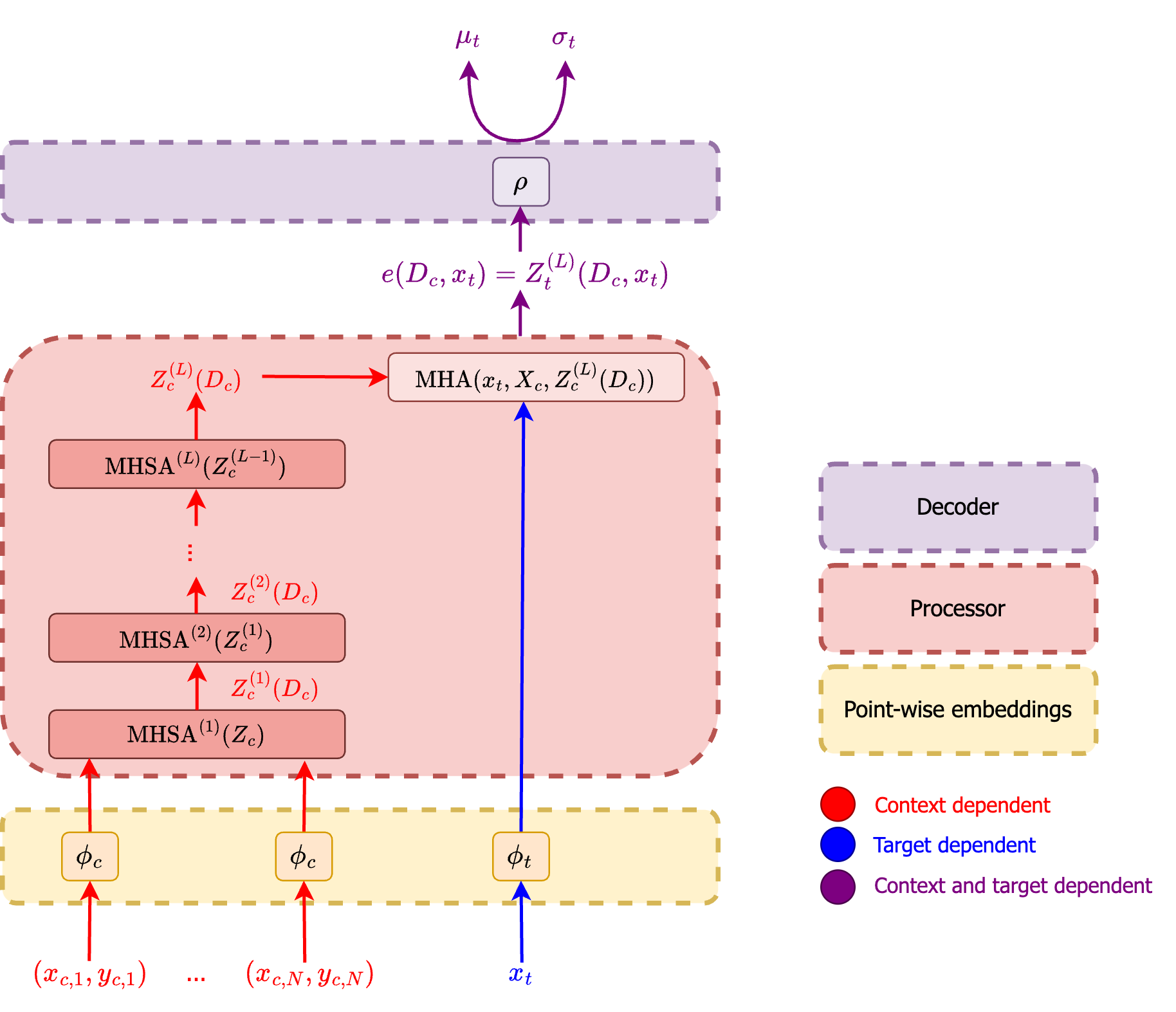}
    \caption{A diagram illustrating the architecture of the \gls{anp} \citep{kim2019attentive}.}
    \label{fig:anp}
\end{figure}

\begin{figure}[htb]
    \centering
    \includegraphics[width=0.75\textwidth]{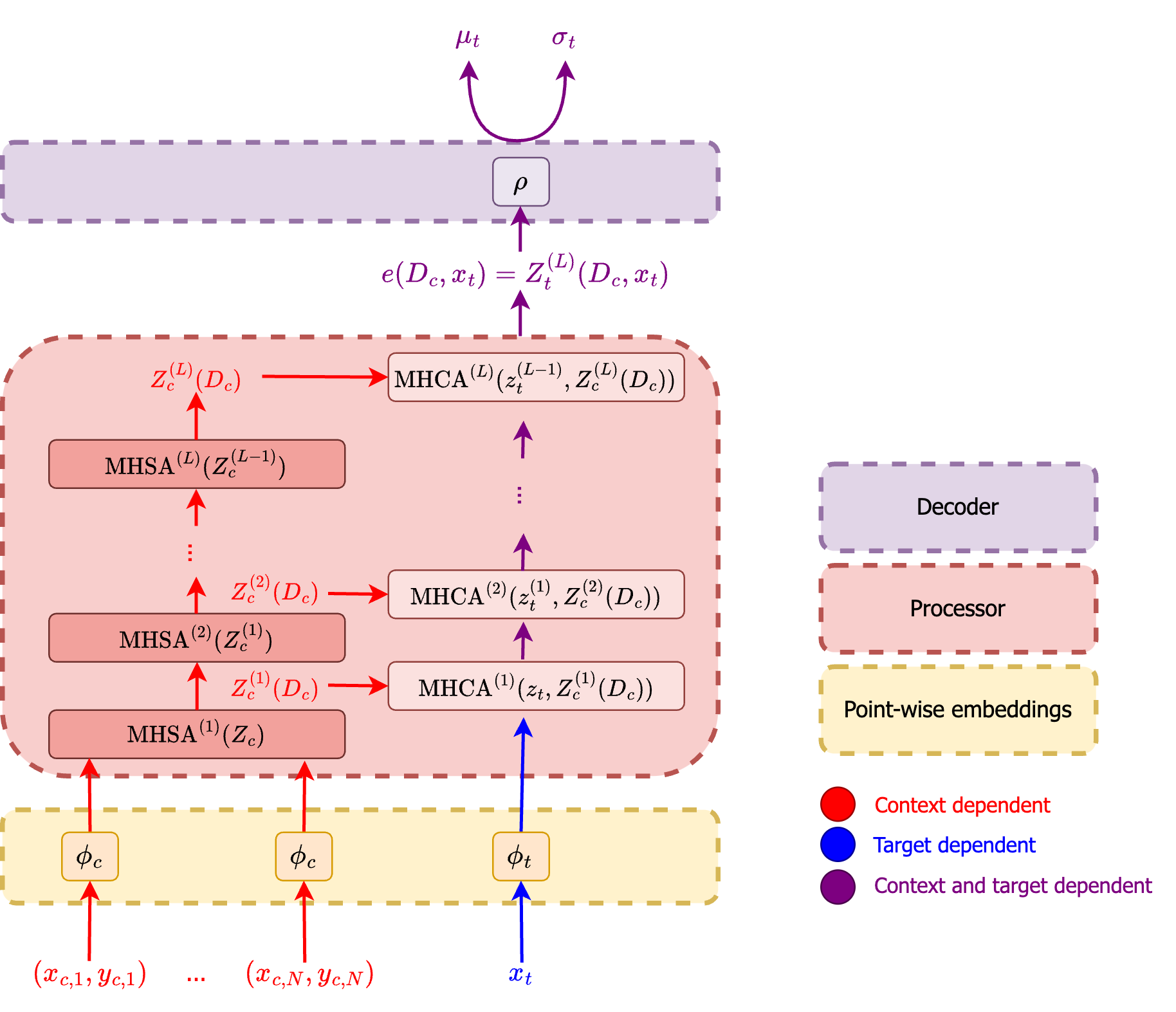}
    \caption{A diagram illustrating the architecture of the \gls{tnp} \citep{nguyen2022transformer}.}
    \label{fig:tnp}
\end{figure}

\begin{figure}[htb]
    \centering
    \includegraphics[width=0.75\textwidth]{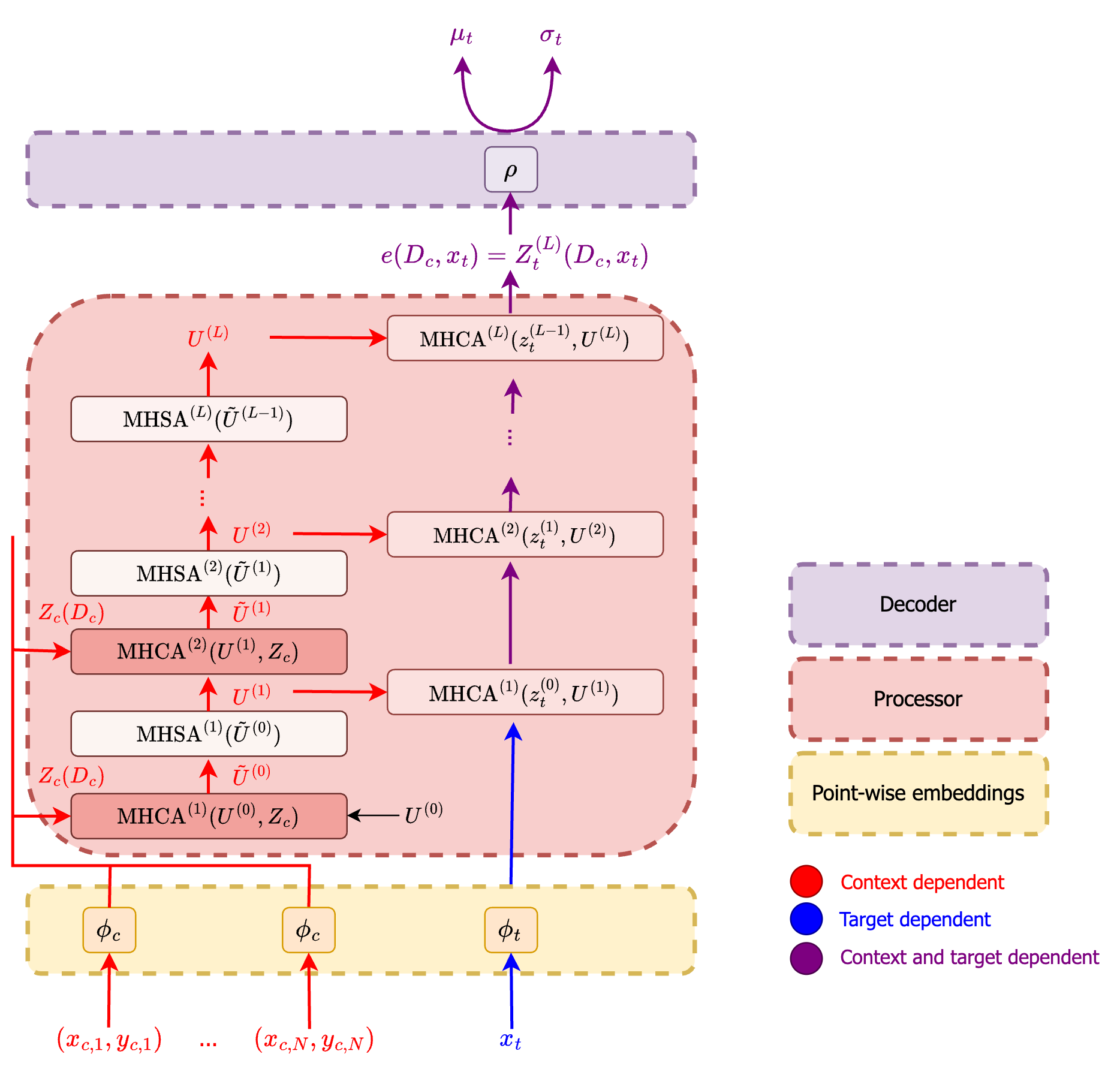}
    \caption{A diagram illustrating the architecture of the \gls{lbanp} \citep{feng2022latent}.}
    \label{fig:lbanp}
\end{figure}

\begin{figure}[htb]
    \centering
    \includegraphics[width=\textwidth]{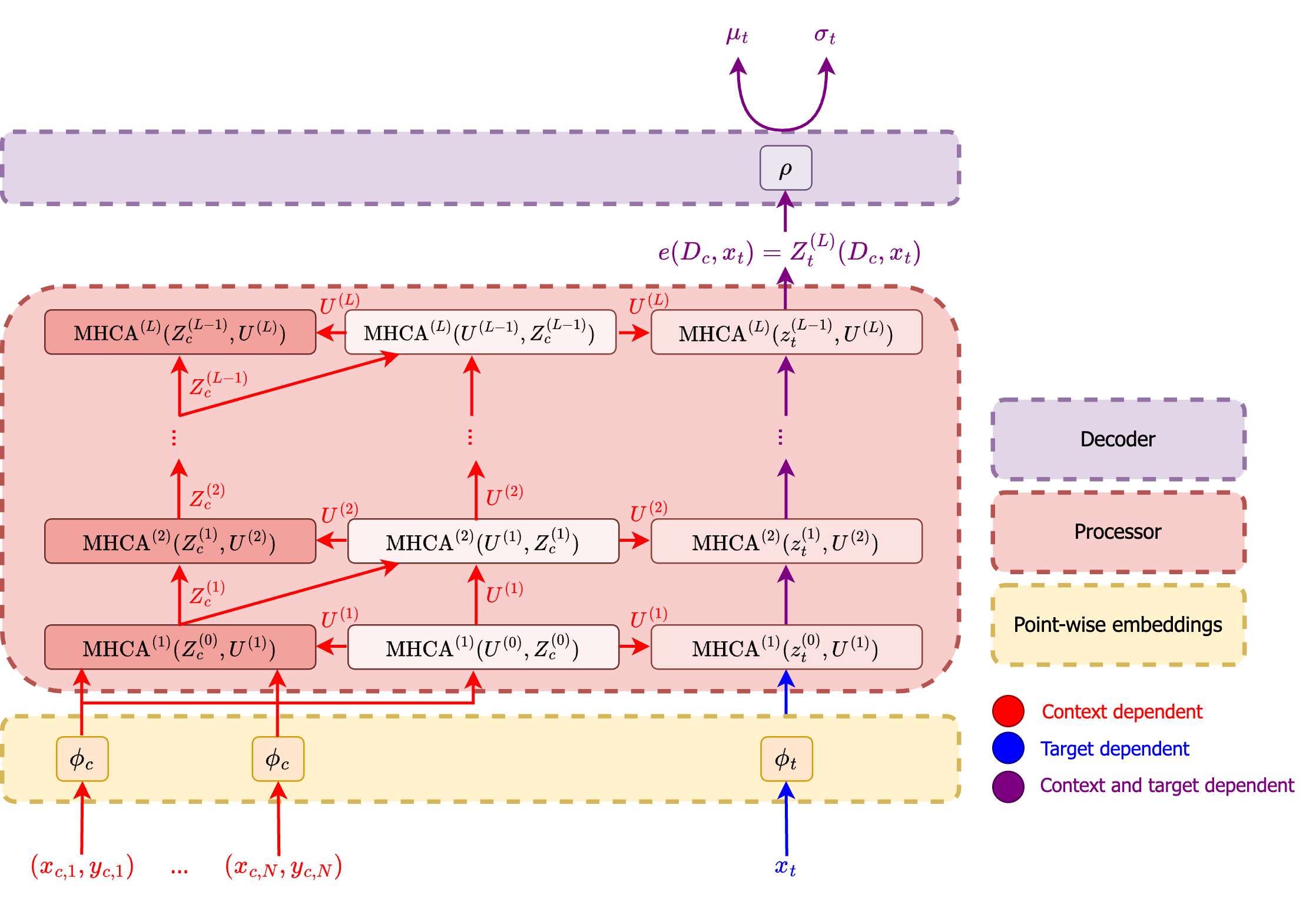}
    \caption{A diagram illustrating the architecture of the \gls{ist}-style \gls{tnp} \citep{lee2019set}.}
    \label{fig:ist}
\end{figure}

\section{Set Functions and Set of Sets Functions}
\label{app:valid_set_function}
In this section we utilise the Deepset result of \cite{zaheer2017deep}, which states that any permutation invariant function on the set $\{z_n\}_{n=1}^N$ can be expressed as
\begin{equation}
    f(z_1, z_2, \ldots, z_N) = \rho\left(\sum_{n=1}^N \phi(z_n)\right)
\end{equation}
for continuous functions $\rho$ and $\phi$. As shown in \citep{lee2019set}, the Deepset formulation is a special case of pseudo-token based transformers operating on sets. It is therefore sufficient to consider only this Deepset formulation.

Let $N_{ic}$ be the number of in-context datasets, each with $N_{1}, N_{2}, \dots, N_{N_{ic}}$ datapoints. Each in-context dataset is embedded into a latent representation through a valid set function of the form
$$f(z_{n, 1}, z_{n, 2}, \dots, z_{n, N_n}) = \rho_{n}\bigg(\sum_{j=1}^{N_n}\phi_{n}(z_{n, j})\bigg)$$
for all $n \in \{1, 2, \dots, N_{ic}\}$.
When combining the representation of all of the in-context datasets, we also obtain a valid set function
\begin{align*}
    & f\Bigg(\{z_{1, 1}, z_{1, 2}, \dots, z_{1, N_1}\}, \dots, \{z_{N_{ic}, 1}, z_{N_{ic}, 2}, \dots, z_{N_{ic}, N_{N_{ic}}}\}\Bigg) = \\
    & f\Bigg(\rho_1\bigg(\sum_{j=1}^{N_1}\phi_1(z_{1, j})\bigg), \dots, \rho_{N_{ic}}\bigg(\sum_{j=1}^{N_{N_{ic}}}\phi_{N_{ic}}(z_{N_{ic}, j})\bigg)\Bigg) = \\
    & \rho\Bigg( \sum_{n=1}^{N_{ic}} \phi \bigg( \rho_n\big(\sum_{j=1}^{N_n}\phi_n(z_{n, j})\big)\bigg)\Bigg).
\end{align*}
Thus, the latent representation of the in-context datasets is a valid set function on the set of in-context datapoints.

\section{Other Choices of Architecture for the \gls{model}}
\label{app:icicl_tnp}
As discussed in \Cref{sec:icicl_tnp}, there exist a number of possible ways to construct the \gls{model}. In \Cref{fig:ic-lbanp}, we provide an alternative architecture for the \gls{model} than that shown in \Cref{fig:icicl-tnp}. We also evaluated the performance of this architecture, and generally found little difference in performance.

\begin{figure}[htb]
    \centering
    \includegraphics[width=\textwidth]{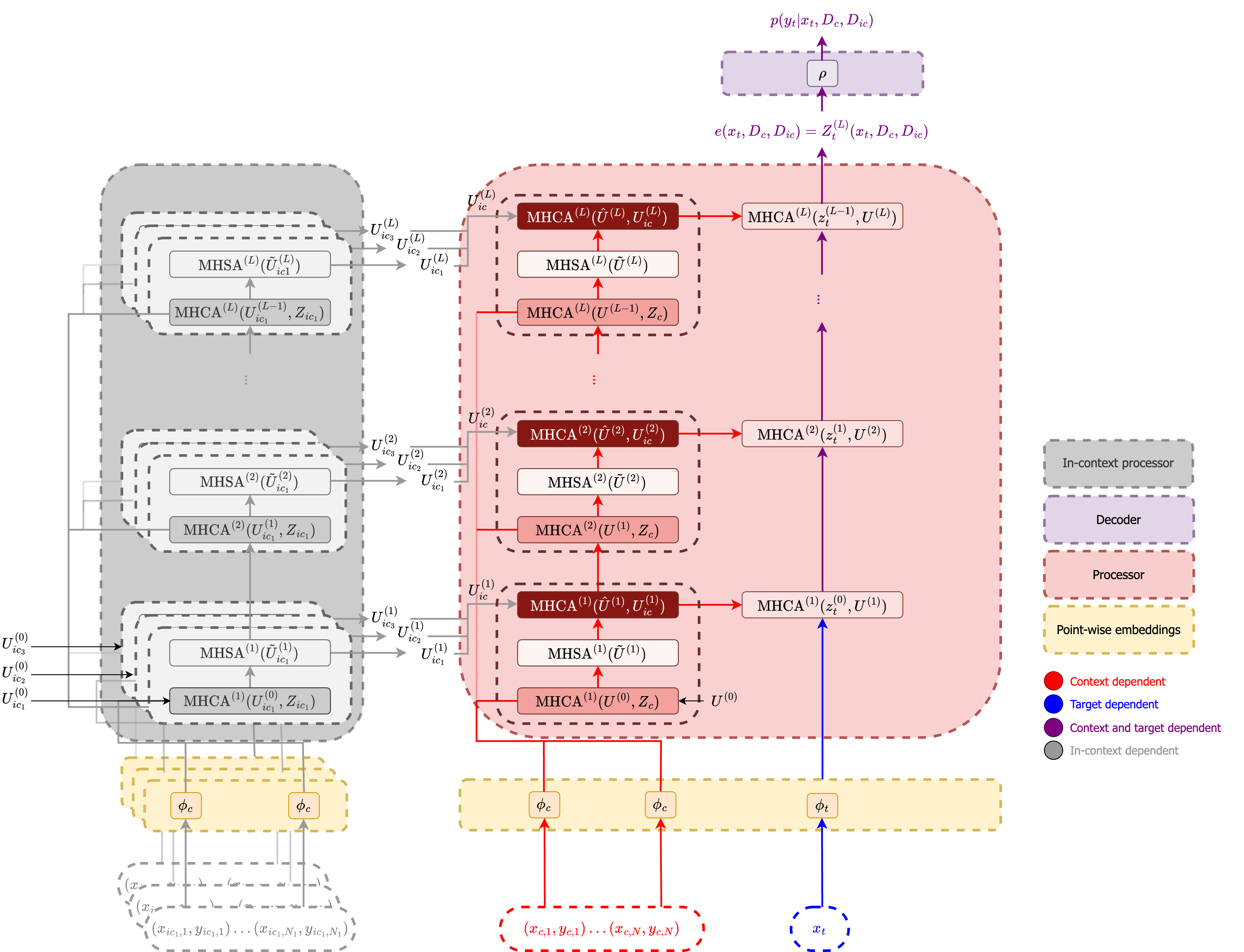}
    \caption{A diagram illustrating the a possible choice for the architecture of the \gls{model}.}
    \label{fig:ic-lbanp}
\end{figure}

\section{ICICL-CNP}
\label{app:icicl_cnp}
It is possible to extend the capabilities of the \gls{cnp} to perform in-context in-context learning. In addition to obtaining a latent representation of the context dataset, we obtain latent representations of the in-context datasets in an identical manner (i.e.\ using a Deepset \citep{zaheer2017deep}). These are aggregated, and the result is then aggregated with the context latent representation to get a latent representation of both the context dataset and in-context datasets. This is then aggregated with the latent representation of the target input, which is then fed into the decoder. We provide a diagrammatic illustration of this in \Cref{fig:icicl_cnp}. 

\begin{figure}
    \centering
    \includegraphics[width=\textwidth]{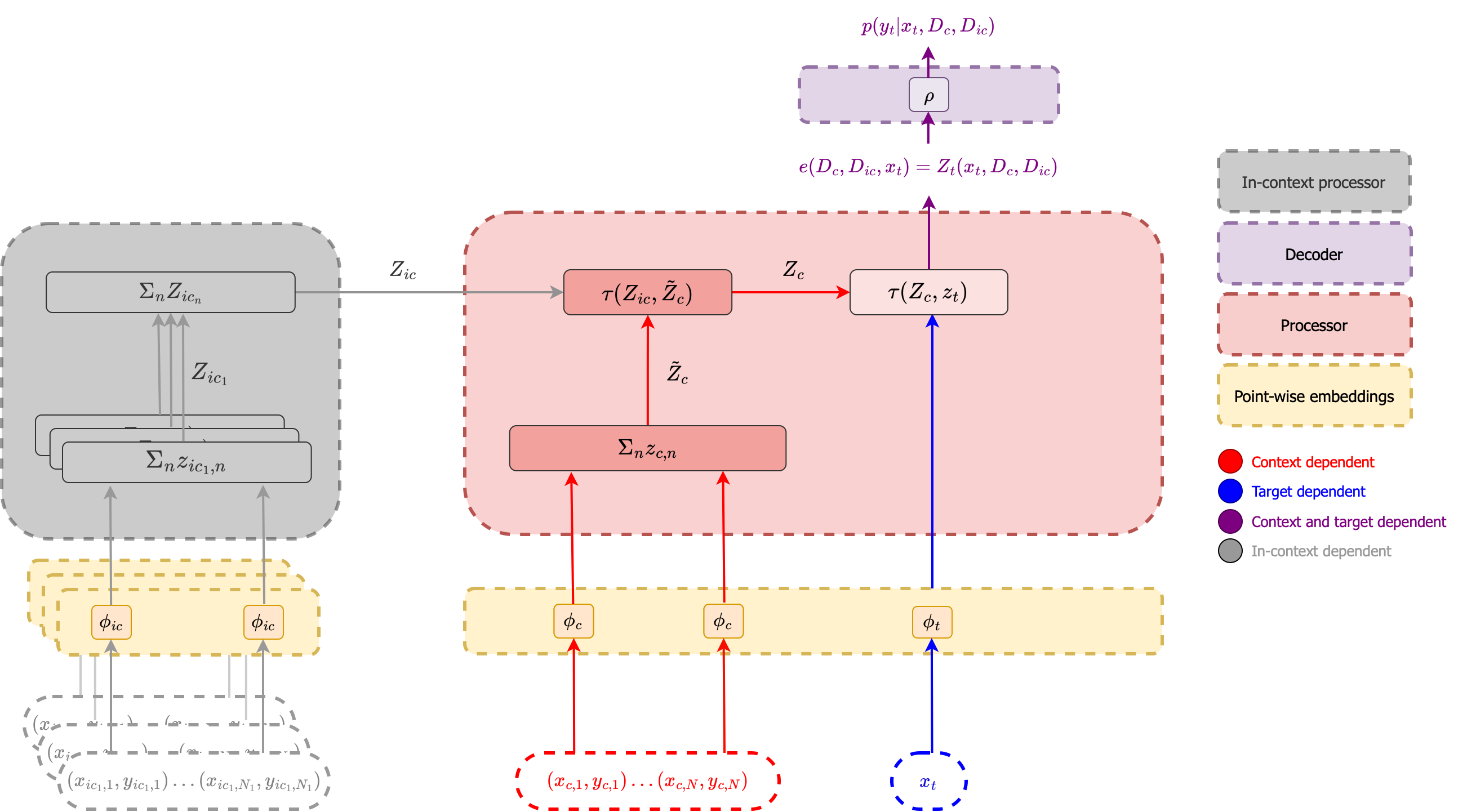}
    \caption{A diagram illustrating the ICICL-\gls{cnp} architecture with three in-context datasets. The point-wise embedding layer is used to get an initial representation of all datapoints, including the target input location $\bfx_t$. Then, for each dataset, its latent representation is obtained using the standard Deepset approach \citep{zaheer2017deep}. The latent representations for the in-context datasets are combined, and then aggregated with the latent representation for the context dataset. The resultant latent representation is aggregated with the target representation, and the processor outputs the encoder representation $e(\bfx_t, \mcD_c, \{\mcD_{ic, j}\}_{j=1}^{j=N_{ic}})$.}
    \label{fig:icicl_cnp}
\end{figure}

\section{Experiment Details}
\label{app:experiment_details}
For all models, we use an embedding / token size of $D_z = 128$, and point-wise encoder and decoder consisting of an MLP with two hidden layers of dimension $D_z$. The decoder parameterises the mean and pre-softplus variance of a Gaussian likelihood with heterogeneous noise. 

All transformer-based models use five layers of operations, with both the \gls{mhca} and \gls{mhsa} layers using $H = 8$ heads of dimension $D_V = D_{QK} = 16$. In each of the attention blocks, we apply a residual connection consisting of layer-normalisation to the input tokens followed by the attention mechanism. Following this, there is another residual connection consisting of layer-normalisation followed by a point-wise MLP with two hidden layers of dimension $D_z$. Initial pseudo-token values are sampled from a standard normal distribution. 

All \gls{cnp} models use Deepsets consisting of MLPs with five layers of dimension $D_z$ for the point-wise embedding. The latent representations for each dataset are obtained by aggregating (mean) the point-wise embeddings for all the datapoints contained in that dataset. The in-context latent representation is obtained by aggregating (mean) the latent representations for each in-context dataset. The in-context latent representation and context latent representation are aggregated through concatenation, being then concatenated again with the representation for the target location. This is then passed through a decoder, consisting of two hidden layers of dimension $D_z$.

We optimise the model parameters using AdamW \citep{loshchilov2017decoupled} with a learning rate of $5\times 10^{-4}$ and batch size of 16. We apply clipping to gradients with magnitudes greater than 0.5.

\subsection{Synthetic 1-D Regression}
\label{app:synthetic-1d}
For each dataset and in-context datasets, we first randomly sample a kernel between RBF and periodic. Then, we sample the kernel's hyperparameter $\ell$ - length-scale in the case of the RBF kernel and period in the case of the periodic kernel. This is sampled according to $\log \ell \sim \mcU_{[\log 0.25, \log 4]}$. In terms of the task, we sample  the number of context points $N_c \sim \mcU\{1, 64\}$, the number of in-context datasets $N_{ic} \sim \mcU\{0, 5\}$, the number of points in each in-context dataset $N_{ic, c} \sim \mcU\{64, 128\}$, the context inputs $x_{c, n} \sim \mcU_{[-2, 2]}$, the target inputs $x_{t, n} \sim \mcU_{[-4, 4]}$, and the inputs for the in-context datasets $x_{ic, j, n} \sim \mcU_{[-4, 4]}$. All tasks use the same number of target points $N_t = 128$. The observations for each task are drawn from a GP with kernel
\begin{equation}
    k_{\text{obs}} = k + \sigma_n^2 \delta(x - x')
\end{equation}
where the observation noise $\sigma_n = 0.2$.

For the \gls{model}, we use $M = 32$ context pseudo-tokens and $M_{ic} = 32$ in-context pseudo-tokens for each in-context dataset. For the \gls{pt-tnp}, we use $M = 32$ pseudo-tokens. Each model is trained for 1,000 epochs, with each epoch consisting of 1,000 iterations. We evaluate the performance of each model on 80,000 test datasets.

\subsubsection{Additional Results}
\label{subsubsec:gp-additional-results}
We also study the behaviour of the \gls{model} when conditioning on in-context datasets that are drawn from a different stochastic process as compared to the context dataset. In particular, in the example from \cref{fig:lbanp-ic-gp}, the GP the context datapoints are drawn from has a periodic kernel with $\ell = 1.85$. In this experiment, we investigate the predictive distribution in five scenarios:
\begin{enumerate}
    \item No in-context datasets;
    \item Three in-context datasets that are drawn from the same stochastic process as the context datapoints;
    \item Three in-context datasets where the datapoints are generated from a GP with an RBF, instead of a periodic kernel (with $\ell = 0.52$);
    \item Three in-context datasets where the datapoints are generated from a GP with a periodic kernel and a similar (but not identical) period $\ell = 1.76$;
    \item Three in-context datasets where the datapoints are generated from a GP with a periodic kernel and a significantly different period $\ell = 0.28$.
\end{enumerate}

The resulting predictive distributions are shown in \Cref{fig:gp-prediction-comparison}. As already noted in the main text, when conditioning on in-context data drawn from the same stochastic process, as opposed to no in-context information, the predictions improve significantly and the uncertainty decreases. When the in-context and context datapoints are drawn from a GP with a different kernel (RBF vs. periodic), the predictions of the \gls{model} are more uncertain, especially far away from the data, but the model still manages to fit the data well in regions with high context datapoints concentration (\Cref{fig:gp-RBF}). When the in-context and context datapoints are drawn from a GP with the same type of kernel and a similar period, the predictions improve significantly as compared to when no in-context information is available (\Cref{fig:gp-periodic-close}). This suggests that even if the two stochastic processes are not exactly the same, but with similar characteristics, the model is still able to extract useful information from the in-context data. In contrast, when the two stochastic processes have wildly different characteristic (i.e. period), the model is unable to explain the data (\Cref{fig:gp-periodic-far}), leading to a large increase in uncertainty in all regions. We note that this could be used as a way to identify whether the in-context and context datasets are indeed generated by the same stochastic process, with the \gls{model} outputting uncertain predictions when there is a severe discrepancy.

\begin{figure}[htbp]
\centering
      {%
        \subfigure[No in-context datasets][c]{\label{fig:gp-noIC}%
          \includegraphics[width=0.65\linewidth,trim={0.5cm 0.5cm 0.5cm 0.5cm},clip]{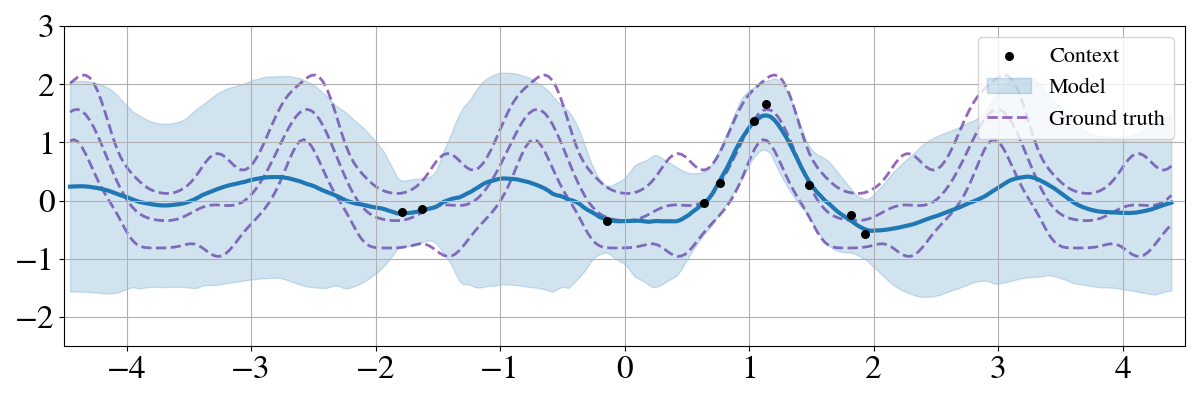}}%
        \\
        \subfigure[Periodic kernel, $\ell = 1.85$][c]{\label{fig:gp-3IC}%
          \includegraphics[width=0.65\linewidth,trim={0.5cm 0.5cm 0.5cm 0.5cm},clip]{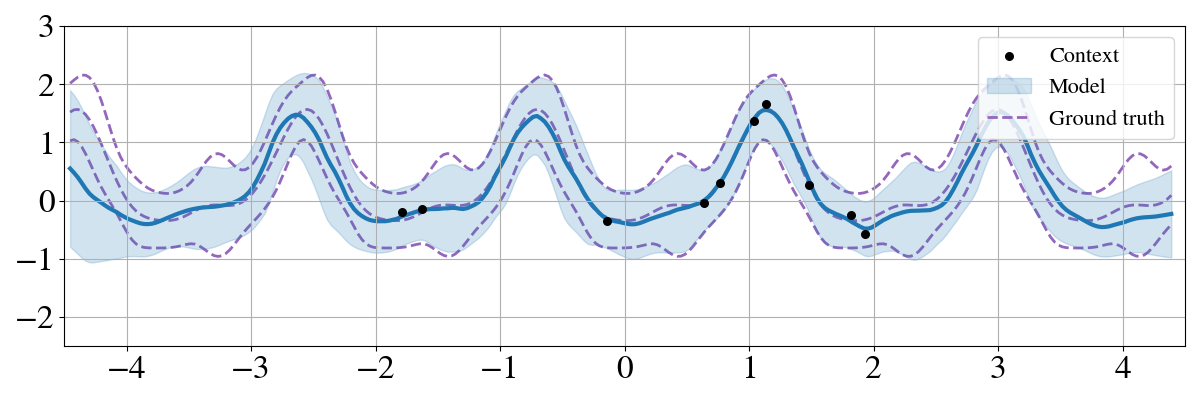}}%
        \\
        \subfigure[RBF kernel, $\ell = 0.52$][c]{\label{fig:gp-RBF}%
          \includegraphics[width=0.65\linewidth,trim={0.5cm 0.5cm 0.5cm 0.5cm},clip]{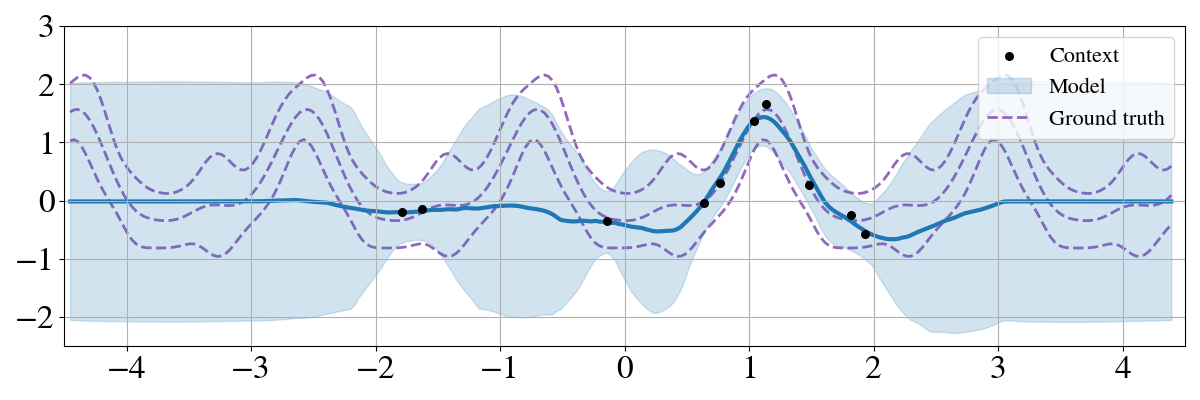}}%
        \\
        \subfigure[Periodic kernel, $\ell = 1.76$][c]{\label{fig:gp-periodic-close}%
          \includegraphics[width=0.65\linewidth,trim={0.5cm 0.5cm 0.5cm 0.5cm},clip]{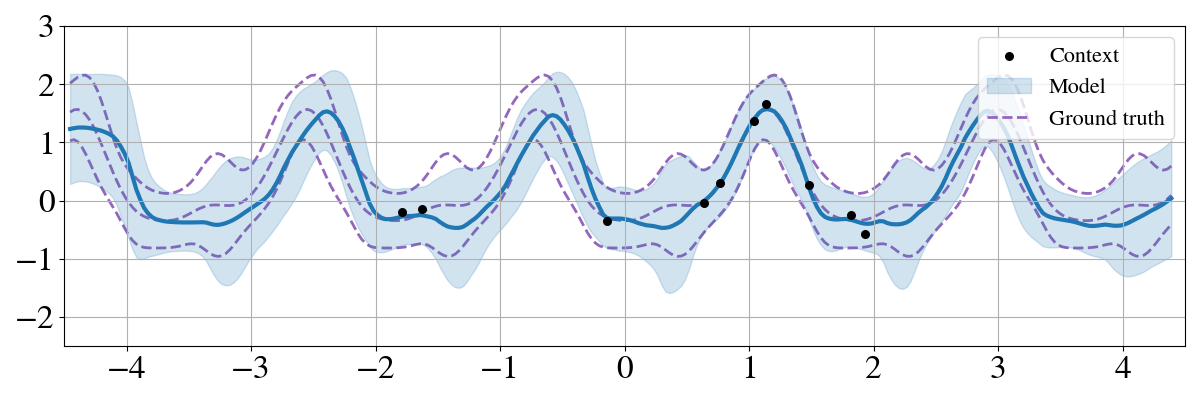}}%
        \\
        \subfigure[Periodic kernel, $\ell = 0.28$][c]{\label{fig:gp-periodic-far}%
          \includegraphics[width=0.65\linewidth,trim={0.5cm 0.5cm 0.5cm 0.5cm},clip]{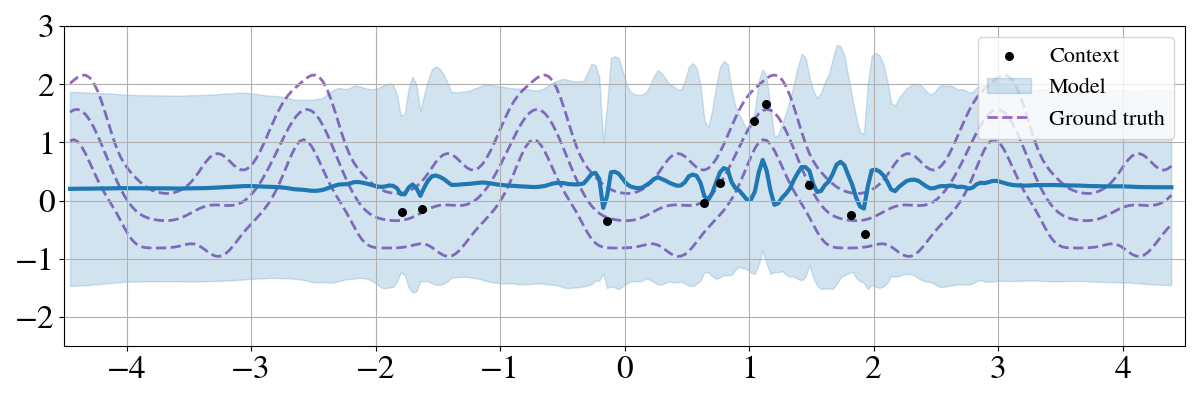}}%
      }
      {\caption{A comparison between the predictive distribution of the \gls{model} with different in-context conditioning information. The GP kernel used to generate the context datapoints was periodic with $\ell = 1.85$.}
      \label{fig:gp-prediction-comparison}}
\end{figure}

\subsubsection{Results for alternative architecture}
We also provide the results on this task for the alternative architecture shown in \Cref{fig:ic-lbanp}. We did not observe a significant difference between this model (\gls{model}2) and \gls{model} on this task, as illustrated in \Cref{table:ic-gp-pt-tnp2}.

\begin{table}[htbp]
\centering
\begin{tabular}{ccc}
\hline
$\boldsymbol{N_{ic}}$ & \textbf{\gls{model}} & \textbf{\gls{model}2} \\ \hline
0     & $-0.607  \pm 0.005 $ & $-0.612 \pm 0.005 $       \\
1     & $-0.499 \pm 0.005 $ & $-0.491 \pm 0.005 $       \\
2     & $-0.474  \pm 0.005 $  & $-0.473 \pm 0.005 $        \\
3     & $-0.469 \pm 0.005 $  & $-0.467 \pm 0.005 $      \\
4     & $-0.467  \pm 0.005 $  & $-0.464 \pm 0.005 $      \\
5     & $-0.466  \pm 0.005 $  & $-0.463 \pm 0.005 $      \\ \hline
\end{tabular}
\caption{Comparison of test log likelihood for the \gls{model} and the \gls{model}2 architectures on the synthetic GP task for a varying number of in-context datasets ($N_{ic}$). The two models have similar performance (within one standard deviation).}
\label{table:ic-gp-pt-tnp2}
\end{table}

\subsubsection{Results for OOD testing}
\label{subsubsec:gp-OOD}
We present some additional examples of predictions on samples that are drawn outside of the distribution the models have been trained on. In the top row of \Cref{fig:gp-prediction-OOD} we consider samples drawn from a GP with RBF kernel and $\ell = 5.19$, whereas in the bottom row we consider the challenging task of modelling datapoints drawn from a GP with a periodic kernel and $\ell = 0.24$. In both cases the \gls{model} is conditioned on three in-context datasets, containing datapoints satisfying the same stochastic process as the context data. The \gls{model} manages to leverage in-context learning to improve its predictions even when the samples are drawn from a different distribution than the one the model has been trained on.

\begin{figure}[htbp]
\centering
      {%
        \subfigure[\gls{pt-tnp}, RBF $\ell=5.19$][c]{\label{fig:gp-tnp-RBF-OOD}%
          \includegraphics[width=0.49\linewidth,trim={0.5cm 0.5cm 0.5cm 0.5cm},clip]{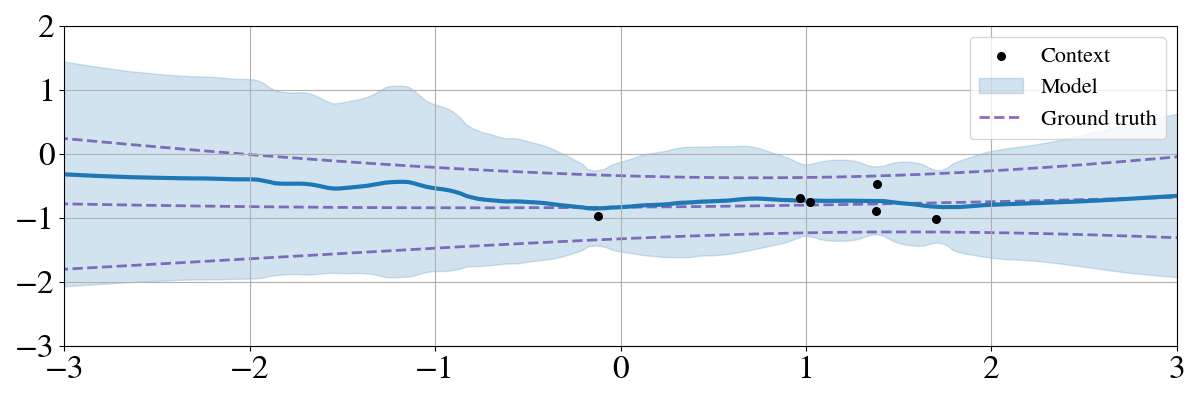}}%
        \hfill
        \subfigure[\gls{model}, RBF $\ell=5.19$][c]{\label{fig:gp-icicltnp-rbf-OOD}%
          \includegraphics[width=0.49\linewidth,trim={0.5cm 0.5cm 0.5cm 0.5cm},clip]{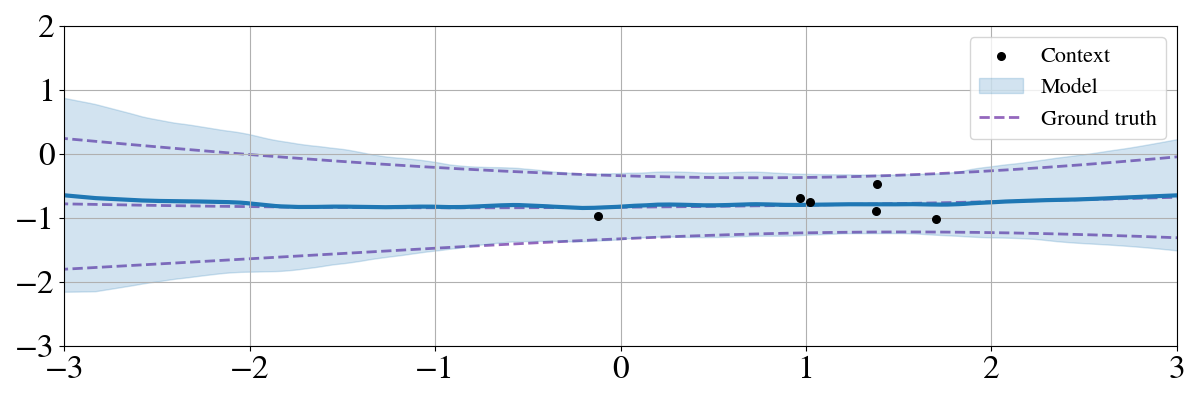}}%
        \hfill
        \subfigure[\gls{pt-tnp}, Periodic $\ell = 0.24$][c]{\label{fig:gp-tnp-periodic-OOD}%
          \includegraphics[width=0.49\linewidth,trim={0.5cm 0.5cm 0.5cm 0.5cm},clip]{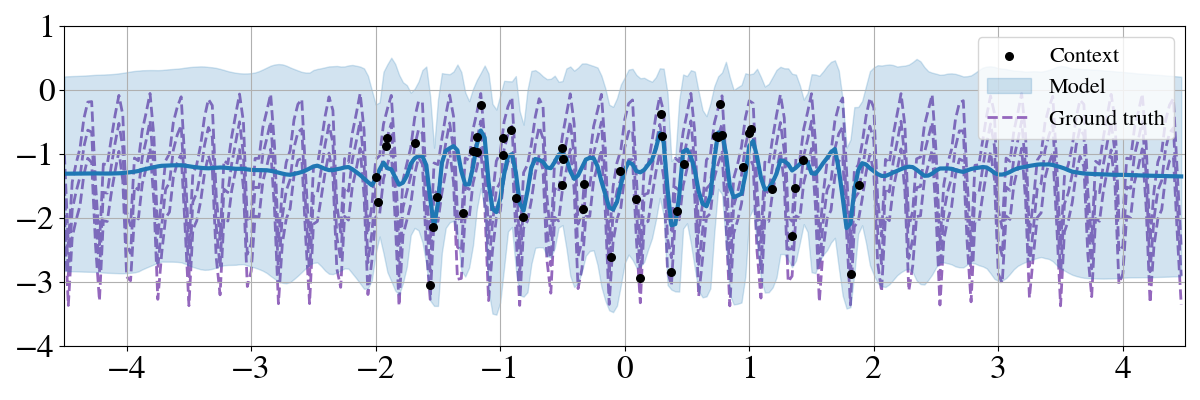}}%
        \hfill
        \subfigure[\gls{model}, Periodic $\ell = 0.24$][c]{\label{fig:gp-icicltnp-periodic-OOD}%
          \includegraphics[width=0.48\linewidth,trim={0.5cm 0.5cm 0.5cm 0.5cm},clip]{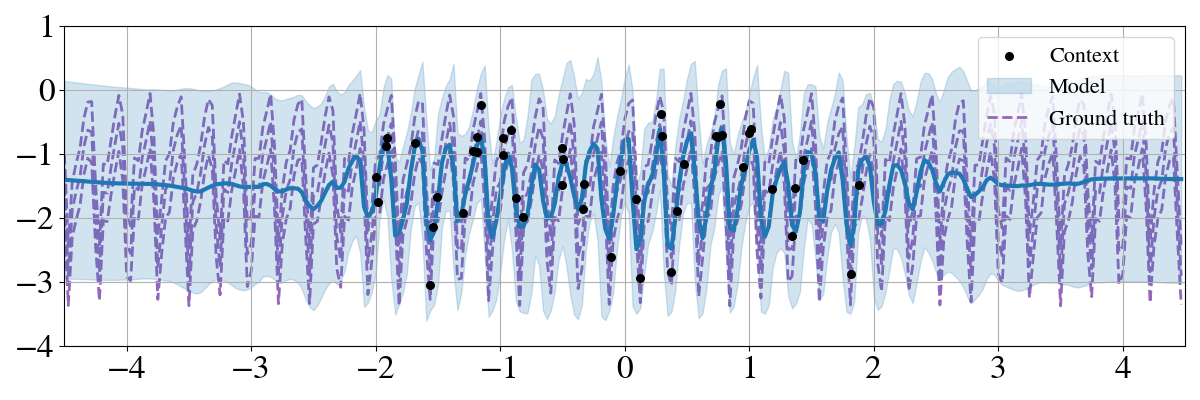}}%
      }
      {\caption{A comparison between the predictive distribution when tested OOD of the \gls{pt-tnp} (left) and the \gls{model} (right). In the top row the samples come from a GP with RBF kernel and $\ell = 5.19$. In the bottom row the samples come from a GP with periodic kernel and $\ell = 0.24$. The \gls{model} is conditioned on three in-context datasets.}
      \label{fig:gp-prediction-OOD}}
\end{figure}

\subsection{Image Completion}
\label{app:image_completion}
For each dataset and in-context datasets, we first sample a label $\ell \sim \mcU\{0, 9\}$, and then sample images from the subset of MNIST images that correspond to that label. The number of context points is sampled as $N_c \sim \mcU\{N/100, N/5\}$, the number of in-context datasets is sampled as $N_{ic} \sim \mcU\{0, 3\}$, and the number of datapoints for each of the in-context datasets is sampled as $N_{ic, j} \sim \mcU\{N/100, N/2\}$. The number of target points is set to the remaining pixels in the image (i.e.\ $N_t = N - N_c$).

For the \gls{model}, we use $M = 64$ context pseudo-tokens and $M_{ic} = 64$ in-context pseudo-tokens for each in-context dataset. For the \gls{pt-tnp}, we use $M = 64$ pseudo-tokens. Each model is trained for 500 epochs, with each epoch consisting of 625 iterations with a batch size of 16. We evaluate the performance of each model on 2,500 images (the test dataset split into groups of four, so that each group has one context dataset and three in-context datasets, so that a maximum of three is available for use).

\subsubsection{Additional Results}
\label{subsubsec:mnist-additional-results}
In \Cref{fig:ic-mnist-additional-comparison} we compare the predictive distribution of the \gls{model} when conditioning on in-context datasets drawn from different stochastic processes (MNIST labels). 

\begin{figure}[htbp]
\centering
    \floatconts
      {fig:ic-mnist-additional-comparison}
      {\caption{A comparison between the predictive distribution of the \gls{model} when conditioning on in-context datasets sampled from different MNIST labels. Observe that when the context set is not compatible with the label of the in-context dataset (i.e.\ 4 and 2), the predictive distribution is less confident.}}
      {%
        \subfigure[In-context dataset label: 5][c]{\label{fig:iclbanp-ic-mnist-new-007}%
          \includegraphics[width=0.9\linewidth,trim={3cm 0.5cm 3cm 0.5cm},clip]{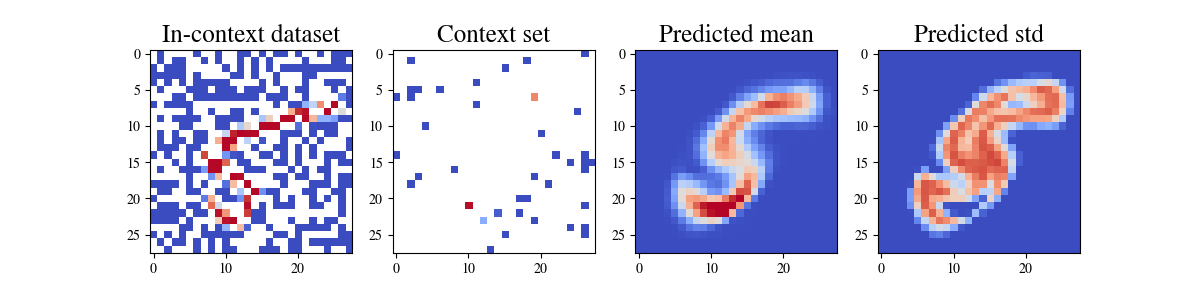}}%
        \\
        \subfigure[In-context dataset label: 1][c]{\label{fig:iclbanp-ic-mnist-new-002}%
          \includegraphics[width=0.9\linewidth,trim={3cm 0.5cm 3cm 0.5cm},clip]{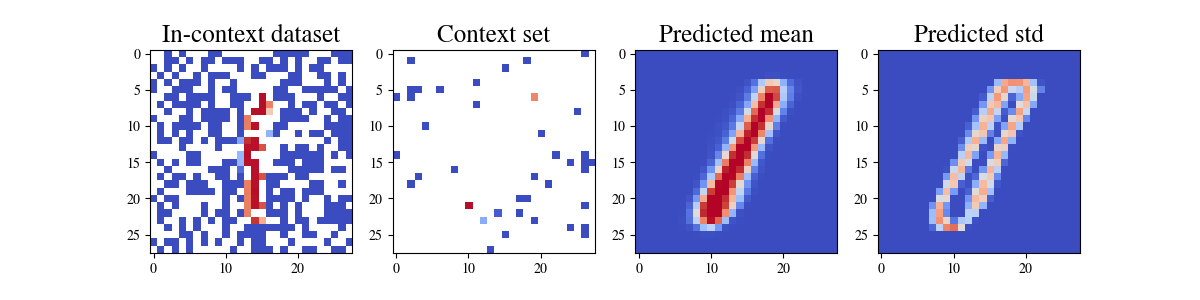}}%
        \\
        \subfigure[In-context dataset label: 4][c]{\label{fig:iclbanp-ic-mnist-new-001}%
          \includegraphics[width=0.9\linewidth,trim={3cm 0.5cm 3cm 0.5cm},clip]{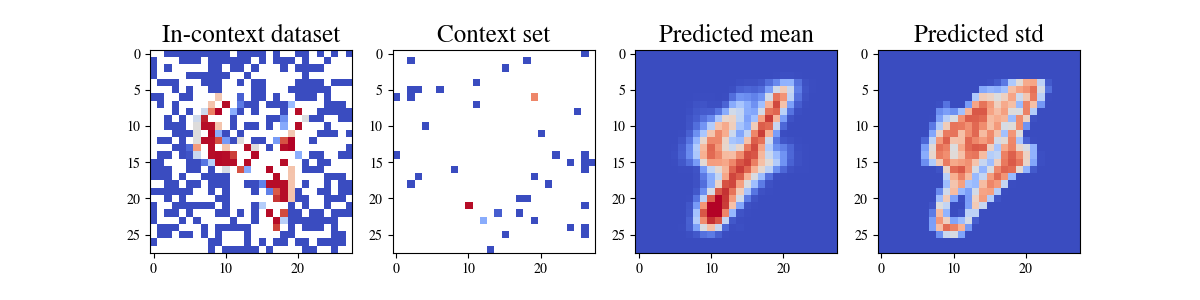}}%
        \\
        \subfigure[In-context dataset label: 2][c]{\label{fig:iclbanp-ic-mnist-new-004}%
          \includegraphics[width=0.9\linewidth,trim={3cm 0.5cm 3cm 0.5cm},clip]{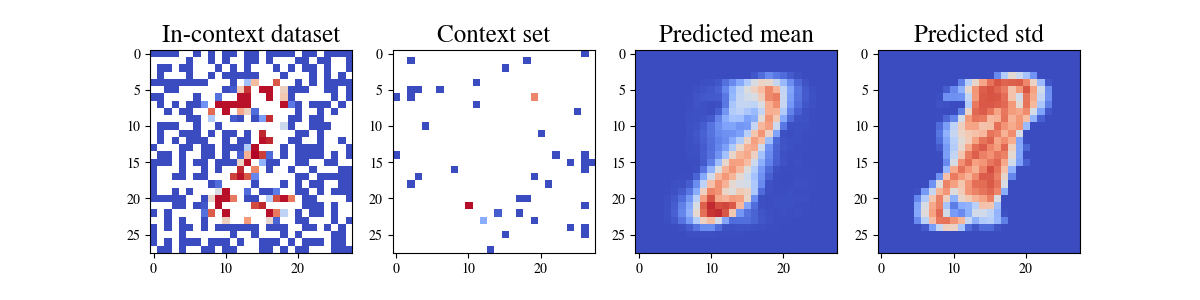}}%
      }
\end{figure}

\subsection{Environmental Data}
\label{subapp:environmental-data}
The environmental dataset consists of surface air temperatures derived from the fifth generation of the European Centre for Medium-Range Weather Forecasts (ECMWF) atmospheric reanalyses (ERA5) \citep{cccs2020}. The data has a latitudinal and longitudinal resolution of $0.5^\circ$, and temporal resolution of an hour. We consider data collected in 2019, sub-sampled at a temporal resolution of six hours. The dataset consists of data within the latitude / longitude range of $[42^{\circ},\ 53^{\circ}]$ / $[8^{\circ},\ 28^{\circ}]$ (roughly corresponding to central Europe), with the training data corresponding to the first six months of 2019, and the test data corresponding to the last six month. Individual datasets are obtained by sub-sampling the larger region, with each dataset consists of a $[10, 10, 3]$ grid spanning $5^\circ$ across each axis and 18 hours. We also provide surface elevation as additional inputs, such that $D_x = 4$. The inputs and outputs are standardised using the mean and standard deviation values obtained from data within the training region. Each dataset consists of a maximum of $N = 300$ datapoints, from which the number of context points are sampled according to $N_c \sim \mcU\{N/100, N/3\}$, with the remaining set as target points. The number of in-context datasets is sampled as $N_{ic} \sim \mcU\{0, 2\}$, and the number of datapoints in each of the in-context datasets is sampled as $N_{ic, j} \sim \mcU\{N/3, N\}$. 

For the \gls{model} and \gls{pt-tnp}, we use $M=32$ context pseudo-tokens and $M_{ic} = 32$ in-context pseudo-tokens for each in-context dataset. We train each model for 500 epochs, with each epoch consisting of 1,000 iterations with a batch size of 16.

\end{document}